\documentclass{article}

\usepackage[preprint]{neurips_2026}

\usepackage[T1]{fontenc}    
\usepackage{hyperref}
\usepackage{url}
\usepackage{booktabs}
\usepackage{amsfonts}
\usepackage{amsmath,amssymb,amsthm}
\usepackage{nicefrac}
\usepackage[table]{xcolor}
\usepackage{graphicx}
\usepackage{svg}
\usepackage{caption}
\usepackage{subcaption}
\usepackage{algorithm}
\usepackage{algpseudocode}
\usepackage{enumitem}
\usepackage{xspace}
\usepackage{tikz}
\usetikzlibrary{positioning,arrows.meta,shapes.geometric,fit,calc,backgrounds,decorations.pathreplacing}
\newtheorem{proposition}{Proposition}
\newtheorem{remark}{Remark}

\newcommand{\pref}{\pi_{\mathrm{ref}}}
\newcommand{\ppol}{\pi_\theta}
\newcommand{\pold}{\pi_{\theta_{\mathrm{old}}}}
\newcommand{\pteach}{p_{\mathrm{T}}}
\newcommand{\Zh}{\widehat{Z}}
\newcommand{\Dz}{\mathcal{D}_{\mathrm{Z}}}
\newcommand{\Dtr}{\mathcal{D}_{\mathrm{train}}}
\DeclareMathOperator*{\Lis}{logsumexp}
\newcommand{\KL}{\mathrm{KL}}
\newcommand{\R}{\mathbb{R}}
\newcommand{\E}{\mathbb{E}}

\DeclareRobustCommand{\method}{\TextOrMath{DISA\xspace}{\mathrm{DISA}}}

\title{DISA: Offline Importance Sampling for Distribution-Matching LLM-RL}

\definecolor{author-color}{RGB}{85, 133, 202}

\author{
  {\bf
  Shaobo Wang$^{{\color{author-color}\boldsymbol{1,2}}}$$^{* \dagger}$ \quad
    Yujie Chen$^{{\color{author-color}\boldsymbol{1,3}}}$\thanks{Equal contribution.} \quad
    Yafeng Sun$^{{\color{author-color}\boldsymbol{1,4}}}$ \quad
    Wenjie Qiu$^{{\color{author-color}\boldsymbol{2,5}}}$  \quad Zhihui Xie$^{{\color{author-color}\boldsymbol{2,6}}}$ 
  } \\ 
  {\bf
  Sihang Li$^{{\color{author-color}\boldsymbol{2,4}}}$ \quad
  Yucheng Li$^{{\color{author-color}\boldsymbol{2}}}$ \quad
  Huiqiang Jiang$^{{\color{author-color}\boldsymbol{2}}}$ \quad
  Xingzhang Ren$^{{\color{author-color}\boldsymbol{2}}}$ \quad
  Xuming Hu$^{{\color{author-color}\boldsymbol{3}}}$   
  } \\ \vspace{5pt}
  {\bf \hspace{3pt}
  Dayiheng Liu$^{{\color{author-color}\boldsymbol{2}}}$ \quad
    Linfeng Zhang$^{{\color{author-color}\boldsymbol{1}}}$\thanks{Corresponding authors.}
  } \\ \vspace{1pt}
  {
  $^{\color{author-color}\boldsymbol{1}}$Shanghai Jiao Tong University \quad 
  $^{\color{author-color}\boldsymbol{2}}$Qwen Team, Alibaba Group
  } \\ \vspace{0pt}
  {
  $^{\color{author-color}\boldsymbol{3}}$The Hong Kong University of Science and Technology (Guangzhou)
  } \\ \vspace{0pt}
  {$^{\color{author-color}\boldsymbol{4}}$The University of Science and Technology of China } \\ \vspace{0pt}
  {
  $^{\color{author-color}\boldsymbol{5}}$Nanjing University \quad
  $^{\color{author-color}\boldsymbol{6}}$The University of Hong Kong
  } 
}

\begin{document}

\maketitle

\begin{abstract}
Modern reasoning agents are increasingly evaluated on their ability to generate multiple valid solution paths, plans, or tool-use traces for a given input. Standard reward-maximizing RL tends to collapse onto the most easily reinforced high-reward mode, whereas distribution-matching RL aims to allocate probability mass across the entire reward-shaped solution set. Achieving this objective requires computing a prompt-dependent partition function over the trajectory space. Because existing distribution-matching methods learn this partition function online alongside the policy, calibration errors in the partition function directly distort policy updates and remain impossible to diagnose independently. We introduce \method, short for \emph{Decoupled Importance-Sampled Anchoring}, which moves this calibration problem outside the RL loop. \method draws proposal trajectories offline, estimates the partition function via importance sampling, and freezes the resulting partition-function estimate before policy optimization begins. This decoupling preserves the distribution-matching objective while strictly separating partition-function estimation from policy learning in data, gradients, loss, and diagnostics. Theoretically, our analysis shows that an exact frozen partition function preserves the reward-tilted optimum, while residual offline estimation error enters through a bounded perturbation envelope. Empirically, on two open-weight backbones across six math and three code benchmarks, \method matches or exceeds the online-coupled distribution-matching baseline FlowRL, outperforms reward-maximization baselines GRPO and GSPO on math averages, and exceeds LoRA-SFT distillation by up to $13.8$ Mean@8 points on the same offline trajectories. An LLM-as-judge evaluation further shows that \method retains substantially more strategy-level diversity than reward-maximization baselines, and sensitivity studies on the proposal strength and inverse temperature follow the bias-variance pattern predicted by the analysis.
\end{abstract}

\section{Introduction}
\label{sec:intro}

Reasoning LLMs and LLM-based agents increasingly produce long trajectories: chain-of-thought solutions span many steps~\citep{cot_wei2022}, agentic systems add planning and tool use~\citep{arpo2025}, and inference often aggregates multiple samples per prompt~\citep{sc_wang2023,letsverify2024,tot_yao2023,rstar2024}. In these settings, a post-trained model should not only place high probability on one answer; it should produce \emph{several distinct correct trajectories} for the same prompt. This calls for post-training objectives that recover the reward landscape while preserving diversity across repeated samples~\citep{flowrl2025,prolongedrl2025,entropymechanism2025}.

\begin{figure}[tb!]
\centering
\includegraphics[width=\textwidth]{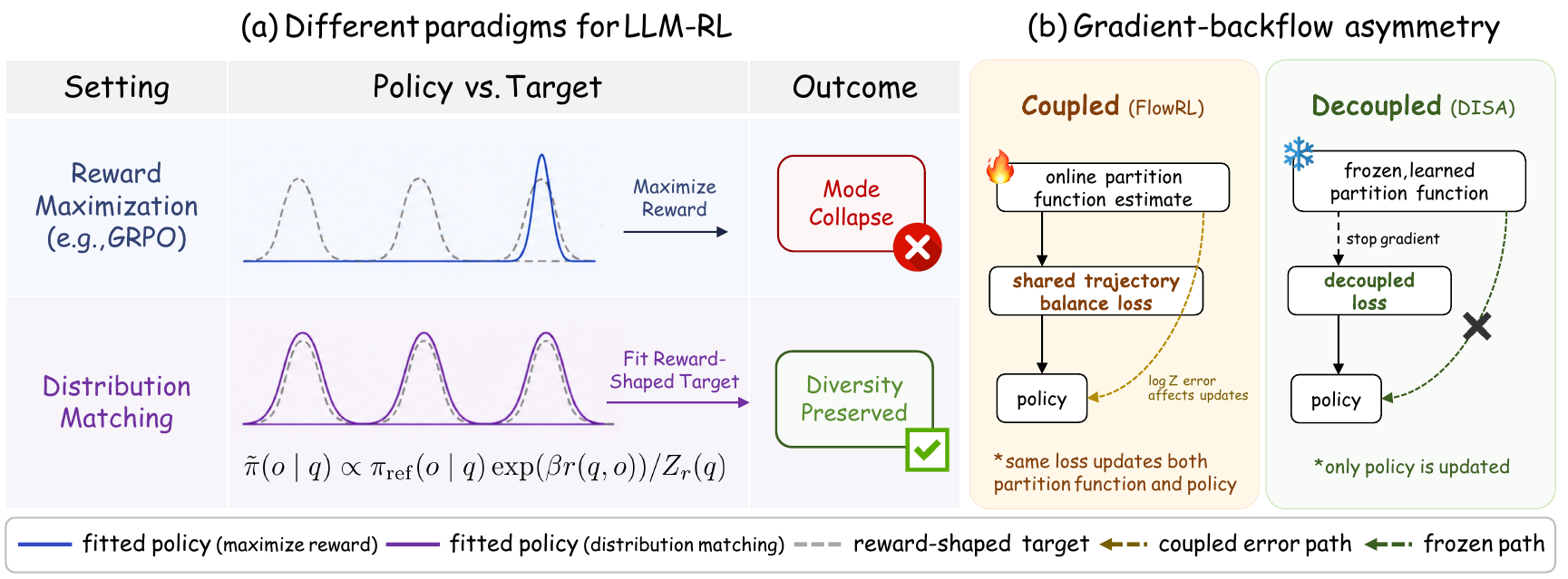}
\caption{\textbf{Two families of post-training and the gradient-backflow
asymmetry that distinguishes them.} \textbf{(a)} On a multi-modal reward
$r(q,\cdot)$, \emph{reward maximization} (PPO/GRPO/GSPO) collapses to a
single mode, while \emph{distribution matching} targets
$\pref e^{\beta r}/Z(q)$ and preserves multi-modal structure.
\textbf{(b)} Prior methods co-train the partition function $Z_\phi$ with $\ppol$ on a shared loss,
so partition-function error rides into $\nabla_\theta$. \method replaces $Z_\phi$ with
a frozen partition function $g_{\psi^\star}$ (dashed wall = stop-gradient), so
$\nabla_\theta$ flows from the policy alone.}
\vspace{-0.5cm}
\label{fig:two-families}
\end{figure}

Existing RL objectives split into two families. \emph{Reward maximization} methods such as PPO, GRPO, DAPO, and GSPO update the policy from scalar rewards under KL-style regularization~\citep{ppo2017,grpo2024,dapo2025,gspo2025}; they are effective for single-answer accuracy but can collapse diverse valid reasoning paths onto one mode~\citep{flowrl2025,prolongedrl2025,entropymechanism2025}. \emph{Distribution matching} instead fits a target where trajectory probability is proportional to exponentiated reward~\citep{bengio2021flow,malkin2022trajectory,yu2024flow,tba2025,flowrl2025}, preserving multi-modal solution structure. Its bottleneck is the per-prompt partition function: prior methods co-train this partition function with the policy on a shared online loss, so partition-function error flows directly into the policy gradient and has no separable quality signal.

The key observation is that this partition function is policy-independent. It is determined by the prompt distribution, reference policy, verifier reward, and inverse temperature, so it can be estimated before policy training begins. We estimate the partition function offline by importance sampling (IS)~\citep{owenmc2013,robertcasella2004,tokdar2010importance,elvira2021advances}, using a fixed proposal that covers reward-bearing trajectories~\citep{owenzhou2000safe}. This turns the per-prompt partition function into a reusable offline estimate, separated from the policy in data, gradients, loss, and diagnostics.

We propose \emph{Decoupled Importance-Sampled Anchoring} (\method), an offline-then-online realization of distribution-matching RL. Stage~1 draws proposal trajectories and forms log-space IS labels for the partition function. Stage~2 fits a lightweight prompt regressor to these labels and freezes it as a partition-function estimate. Stage~3 uses the frozen partition function inside the trajectory-balance objective, with no gradient through the regressor. Appendix~\ref{app:proofs} gives the formal guarantees: an exact frozen partition function preserves the reward-tilted optimum, prompt-only partition-function bias preserves it as an on-policy stationary point, and residual partition-function error enters through a bounded loss-perturbation envelope. Our contributions are as follows:

\begin{itemize}[leftmargin=*, topsep=2pt, itemsep=1pt, parsep=0pt, partopsep=0pt]
\item We identify policy--partition-function coupling as the core obstacle in distribution-matching RL for LLM post-training: it injects partition-function error into the policy gradient and leaves the partition function without an independent quality signal.
\item We propose \method, a three-stage pipeline that estimates the partition function offline, amortizes it into a frozen prompt-conditioned partition function, and uses that partition function in online trajectory-balance RL. We prove exact-partition-function preservation, biased-partition-function stationarity, and a bounded partition-function-error perturbation in Appendix~\ref{app:proofs}.
\item We validate \method against GRPO, GSPO, and FlowRL under matched backbones, corpora, and RL infrastructure on six math benchmarks, including AIME~2024/2025, AMC~2023, MATH-500, Minerva, and OlympiadBench, and three code benchmarks, namely LiveCodeBench, CodeForces, and HumanEval+. On Qwen3-4B-Base, \method matches or exceeds FlowRL on math, with $65.5$ vs.\ $64.8$ Mean@8, exceeds GRPO by $16.1$ points, and achieves the strongest code pass@16, with $31.4$ vs.\ FlowRL $30.9$, GSPO $29.9$, and GRPO $27.7$. An AIME LLM-as-judge evaluation shows that \method retains the largest fraction of the backbone's strategy diversity, with $3.72$ on a $1$-to-$5$ scale compared with FlowRL $3.40$ and GRPO/GSPO $\approx 3.0$.
\end{itemize}

\section{Method: Decoupled Importance-Sampled Anchoring (\texorpdfstring{\method}{DISA})}
\label{sec:method}

\begin{figure}[tb!]
\centering
\includegraphics[width=\textwidth]{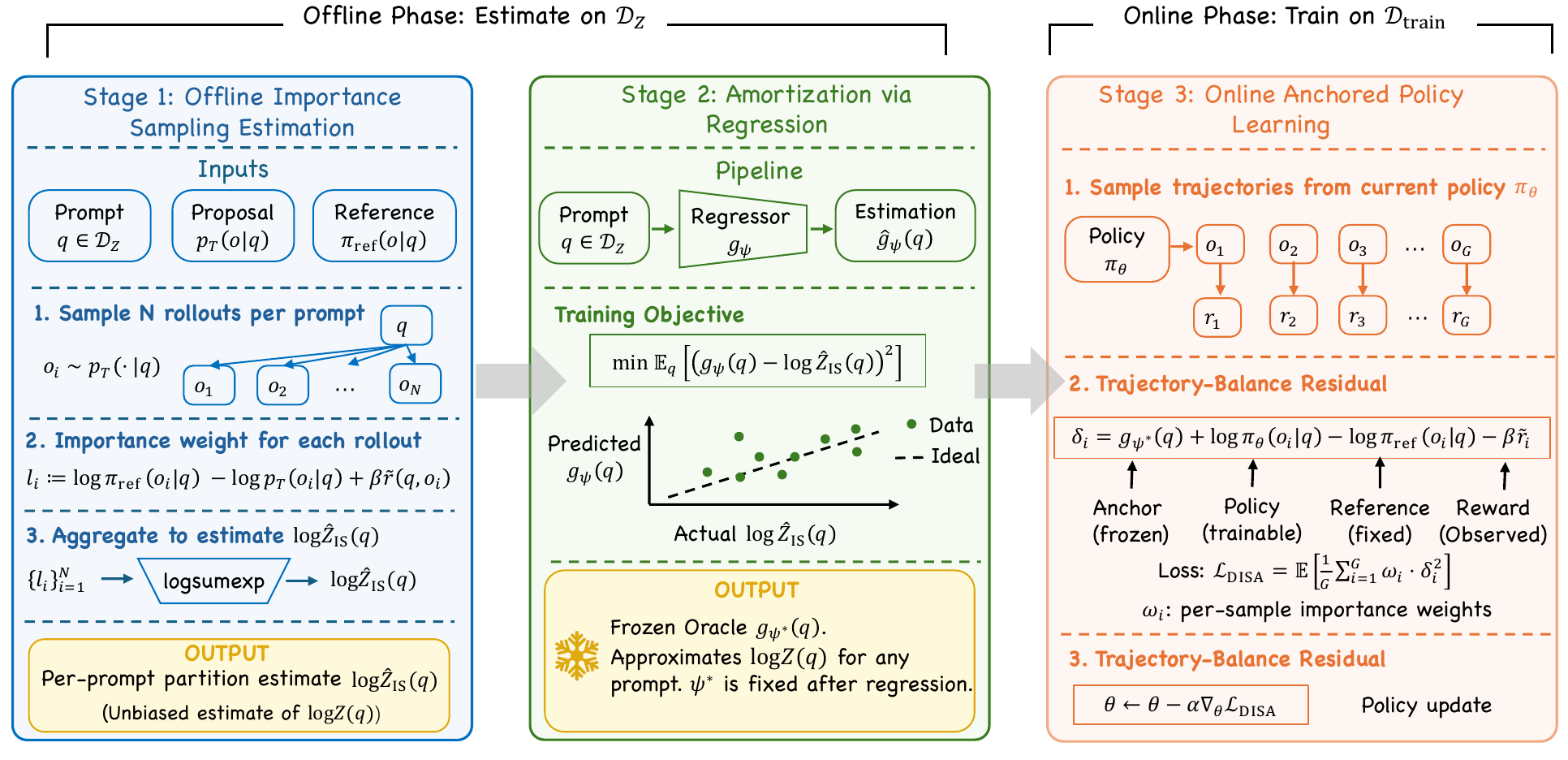}
\caption{\textbf{The pipeline of DISA}, which includes three stages. \emph{Stage~1, offline IS estimation:} for each prompt $q$, draw $N$ trajectories from a proposal $\pteach$ and form the per-prompt label $\log\Zh_{\mathrm{IS}}(q)$ via the logsumexp aggregator. \emph{Stage~2, amortization:} fit a regressor $g_\psi$ to these labels by least squares. \emph{Stage~3, anchored RL:} freeze $g_{\psi^\star}$ and substitute it for $\log Z_\phi$ in the trajectory-balance loss, so the partition function enters the policy update as a per-prompt constant with no gradient through $\psi$.}
\label{fig:pipeline}
\end{figure}

\subsection{Background}
\label{sec:method:bg}

\noindent\textbf{Notation and setup.} Throughout the paper, $q\in\mathcal{Q}$ denotes a prompt and $o\in\mathcal{O}$ a complete output trajectory. A policy $\pi(o\mid q)$ assigns probabilities to trajectories. We are given a reference policy $\pref$, taken in practice to be the supervised-fine-tuned model, and a verifiable binary reward $r:\mathcal{Q}\times\mathcal{O}\to\{0,1\}$ (math-verify for math, execution checks for code) that scores a trajectory once it is complete. The goal of RL post-training is to update the trainable policy $\ppol=\pi_\theta$ from $\pref$ using the reward signal.

\noindent\textbf{Reward maximization: PPO and GRPO.} Let a sampled trajectory be $o_i=(y_{i,1},\ldots,y_{i,T_i})$. PPO~\citep{ppo2017} does \emph{not} use a trajectory-level probability ratio; it clips token-level likelihood ratios
\begin{equation}
\rho_{i,t} \;:=\; \frac{\ppol(y_{i,t}\mid q,y_{i,<t})}{\pold(y_{i,t}\mid q,y_{i,<t})},
\label{eq:bg:ratio}
\end{equation}
where $\pold$ is the policy iterate used to collect the rollouts, and maximizes the token-averaged clipped surrogate
\begin{equation}
\mathcal{L}_{\mathrm{PPO}}(\theta) \;=\; \E\!\bigg[\frac{1}{T_i}\sum_{t=1}^{T_i}\min\!\big(\rho_{i,t}A_{i,t},\;\mathrm{clip}(\rho_{i,t},1-\epsilon,1+\epsilon)A_{i,t}\big)\bigg],
\label{eq:bg:ppo}
\end{equation}
where $A_{i,t}$ is a per-token advantage and $\epsilon>0$ is the clip half-width.
GRPO~\citep{grpo2024} keeps the same token-level clipping but replaces PPO's value-network advantage with a group-normalized trajectory reward. For each prompt $q$, a group of $G$ trajectories $\{o_i\}_{i=1}^G$ is sampled from $\pold$, and the rewards $r_i:=r(q,o_i)$ are normalized within the group,
\begin{equation}
\hat r_i \;=\; \frac{r_i - \bar r_q}{s_q},
\qquad
\bar r_q = \tfrac{1}{G}\textstyle\sum_i r_i,\quad
s_q^2 = \tfrac{1}{G}\textstyle\sum_i (r_i-\bar r_q)^2.
\label{eq:bg:grpo-rhat}
\end{equation}
The scalar $\hat r_i$ is then used as the advantage for all tokens in trajectory $o_i$. Thus the reward-maximization family uses token-level policy-ratio clipping together with scalar trajectory rewards; it does not explicitly fit a normalized distribution over complete trajectories. In the formal distribution-matching statements below, we write $\tilde r(q,o)=a_q r(q,o)+b_q$ for a fixed prompt-conditional affine transform of the verifier reward, with $a_q>0$. The finite group-normalized $\hat r_i$ used in implementation is the plug-in counterpart of this fixed transform.

\begin{remark}[Mode collapse of reward maximization]\label{rmk:mode-collapse}
PPO and GRPO follow a scalar reward gradient under a clip-stabilized off-policy surrogate; they never instantiate the reward-tilted distribution as an explicit target. The reported empirical consequence is that probability mass concentrates on a small subset of high-reward modes, eliminating the diversity of correct trajectories that downstream pass@$k$ and multi-sample evaluation rely on~\citep{flowrl2025,prolongedrl2025,entropymechanism2025}.
\end{remark}

\noindent\textbf{Distribution matching: GFlowNets and FlowRL.} A distribution-matching alternative replaces reward maximization with a fitting target. Fix an inverse temperature $\beta>0$. For a fixed reward transform $\tilde r$, the reward-tilted distribution is
\begin{equation}
\tilde\pi(o\mid q) \;=\; \frac{1}{Z(q)}\,\pref(o\mid q)\exp(\beta \tilde r(q,o)),
\qquad
Z(q) \;=\; \sum_{o\in\mathcal{O}} \pref(o\mid q)\exp(\beta \tilde r(q,o)).
\label{eq:m:target}
\end{equation}
The per-prompt scalar $Z(q)$ is the partition function. Because $\tilde r(q,o)=a_q r(q,o)+b_q$, the target in~\eqref{eq:m:target} is the usual Boltzmann target with a prompt-dependent effective inverse temperature $\beta_{\mathrm{eff}}(q)=a_q\beta$; the additive shift $b_q$ is absorbed into $Z(q)$.

GFlowNets~\citep{bengio2021flow,malkin2022trajectory} fit a target of this form by jointly training the policy $\ppol$ and a learned partition-function model $Z_\phi(q)$ with parameters $\phi$, via the \emph{trajectory-balance} (TB) loss,
\begin{equation}
\mathcal{L}_{\mathrm{TB}}(\theta,\phi) \;=\; \E_{(q,o)}\!\Big[\big(\log Z_\phi(q) + \log\ppol(o\mid q) - \log\pref(o\mid q) - \beta\, \tilde r(q,o)\big)^2\Big].
\label{eq:bg:tb}
\end{equation}
With the exact $\log Z(q)$, the residual vanishes at the target distribution in~\eqref{eq:m:target}. FlowRL~\citep{flowrl2025} adapts this residual to LLM reasoning by using normalized rewards, evaluating residuals on online rollouts, and learning the prompt partition function online. To state the distribution-matching target without conflating it with PPO's token-level clipping, we write the formal trajectory-level objective with a nonnegative rollout weight $\omega_i$:
\begin{equation}
\mathcal{L}_{\mathrm{FlowRL}}(\theta,\phi) \;=\; \E_{(q,\{o_i\})}\!\bigg[\,\frac{1}{G}\sum_{i=1}^G \omega_i\,\Big(\log Z_\phi(q) + \log\ppol(o_i\mid q) - \log\pref(o_i\mid q) - \beta\,\tilde r(q,o_i)\Big)^2\bigg].
\label{eq:bg:flowrl}
\end{equation}
Here $\omega_i$ is a rollout-level weighting factor equal to $1$ in the on-policy case; it is not the PPO token ratio in~\eqref{eq:bg:ratio}. Some implementations rescale the log-likelihood ratio by trajectory length for numerical conditioning; this paper's formal guarantees are stated for the trajectory-level residual in~\eqref{eq:bg:flowrl}, which is the residual for which a scalar partition function has the Boltzmann fixed point in~\eqref{eq:m:target}.

\begin{remark}[Policy--partition-function coupling]\label{rmk:coupling}
In FlowRL, $\theta$ and $\phi$ are updated jointly by SGD on~\eqref{eq:bg:flowrl} against the same online rollouts. Because both gradients descend the same scalar squared residual, estimation error in $Z_\phi$ and the policy gradient share a single channel: partition-function error propagates directly into the policy update, and the partition function admits no quality signal separable from the policy's loss. \method therefore addresses the following design question: estimate $\log Z(q)$ outside the policy loop, with sufficient accuracy that the exact partition-function target is preserved, and with sufficient stability that residual estimation error enters the policy objective only through a controlled perturbation.
\end{remark}

\subsection{Decoupled Importance-Sampled Anchoring (\method)}
\label{sec:method:disa}

\noindent\textbf{Design principles.} The formal partition function $Z(q)$ in~\eqref{eq:m:target} depends only on $\pref$ and the fixed transformed reward $\tilde r$, not on $\ppol$; it is therefore policy-independent and can be estimated outside the RL loop. \method instantiates this observation through three stages: an offline importance-sampling estimator that produces a per-prompt label $\log\Zh_{\mathrm{IS}}(q)$, a small regressor that amortizes these labels into a function of $q$, and an RL stage that freezes the regressor and substitutes it into the TB residual in place of $\log Z_\phi(q)$. The exact guarantees are stated for the fixed-transform target above; finite-sample plug-in errors from using batch-normalized rewards are treated as approximation error in the offline labels and in the frozen regressor, as formalized in App.~\ref{app:proof:reg-bound}; Algorithm~\ref{alg:decrl} in App.~\ref{app:method:algorithm} summarizes the full offline--online procedure.

\noindent\textbf{Stage 1: importance-sampled estimation of the partition function.} Let $\Dz$ denote the offline prompt set used for partition-function estimation; in practice we take $\Dz=\Dtr$, the RL training prompts (\S\ref{sec:experiments:setup}). For each prompt $q\in\Dz$, DISA draws $N$ samples from a proposal model $\pteach(\cdot\mid q)$ and evaluates their rewards and log-probabilities under both $\pref$ and $\pteach$. Direct Monte Carlo with $o_i\sim\pref$ is unbiased on the linear scale but statistically degenerate on difficult reasoning problems because $\pref$ rarely produces reward-bearing trajectories. We therefore use a proposal model that places more mass on the high-reward tail. For any $\pteach$ satisfying the condition in App.~\ref{app:proof:unbiased},
\begin{equation}
Z(q) \;=\; \E_{o\sim\pteach}\!\bigg[\frac{\pref(o\mid q)}{\pteach(o\mid q)}\,\exp(\beta\, \tilde r(q,o))\bigg],\qquad
\Zh_{\mathrm{IS}}(q) \;=\; \frac1N\sum_{i=1}^{N} w(q,o_i),
\label{eq:m:is}
\end{equation}
with $w(q,o) := (\pref/\pteach)\exp(\beta \tilde r(q,o))$ and $o_i\sim\pteach$. For numerical stability, we evaluate the estimator in log-space:
\begin{equation}
\log\Zh_{\mathrm{IS}}(q) \;=\; \Lis_i\Big(\log\pref(o_i\mid q)-\log\pteach(o_i\mid q)+\beta\, \tilde r(q,o_i)\Big) - \log N.
\label{eq:m:logsumexp}
\end{equation}
The linear-scale estimator~\eqref{eq:m:is} is unbiased; its log-space form~\eqref{eq:m:logsumexp} carries an $O(1/N)$ Jensen bias and is preferred over the geometric-mean log-aggregator $\tfrac1N\sum_i\log w(q,o_i)$, whose downward bias does not decay with $N$ (Apps.~\ref{app:proof:unbiased},~\ref{app:proof:gm}). The proposal $\pteach$ enters \method only through $w$: swapping $\pteach$ alters the variance of the IS labels but not the target under the absolute-continuity condition.

\noindent\textbf{Stage 2: amortized denoising via a prompt-conditioned regressor.} Stage~1 yields a per-prompt estimate $\log\Zh_{\mathrm{IS}}(q)$ for every $q\in\Dz$. Substituting these labels directly into the TB loss would retain per-prompt sampling noise and would provide no value for prompts outside $\Dz$. We therefore introduce an amortizer $g_\psi:\mathcal{Q}\to\R$ that maps a prompt to a scalar prediction of $\log Z(q)$. We train $g_\psi$ once by least-squares regression,
\begin{equation}
\psi^{\star}=\mathop{\arg\min}_\psi \;\;\E_{q\sim\Dz}\!\Big[\big(g_\psi(q)-\log\Zh_{\mathrm{IS}}(q)\big)^2\Big].
\label{eq:m:reg}
\end{equation}
The held-out validation loss or $R^2$ of this fit is a separable diagnostic of how well the regressor fits the offline IS labels; it is not an online policy-training loss. Only $g_{\psi^\star}$ is retained after this stage, and the proposal rollouts can be discarded.

\noindent\textbf{Stage 3: anchored RL with a frozen partition function.} During RL training, we freeze $\psi$ and substitute $\log Z(q)\leftarrow g_{\psi^{\star}}(q)$ in the TB residual. At the online objective level, the only change from FlowRL in~\eqref{eq:bg:flowrl} is the source of the partition function: FlowRL learns $Z_\phi$ jointly with the policy, whereas \method queries a frozen offline partition-function estimate with no gradient through $\psi$, so the partition function acts as a fixed per-prompt scalar in policy optimization. With the formal fixed transform $\tilde r$, the objective is
\begin{equation}
\mathcal{L}_{\method}(\theta) \;=\; \E_{(q,\{o_i\})}\!\bigg[\,\frac{1}{G}\sum_{i=1}^G \omega_i\,\Big(g_{\psi^{\star}}(q) + \log\ppol(o_i\mid q) - \log\pref(o_i\mid q) - \beta\,\tilde r(q,o_i)\Big)^2\bigg].
\label{eq:m:decrl}
\end{equation}
Formal guarantees are deferred to Appendix~\ref{app:proofs}: with an exact partition function the global minimizer of~\eqref{eq:m:decrl} matches the target~\eqref{eq:m:target} (App.~\ref{app:proof:fixedpoint}); a prompt-only partition-function bias preserves on-policy stationarity but not global minimality (App.~\ref{app:on-policy}); and the partition-function MSE $\sigma_g^2:=\E_q[(g_{\psi^\star}(q)-\log Z(q))^2]$ controls the loss perturbation by a Cauchy--Schwarz envelope $\sigma_g^2+2\sigma_g\sqrt{\mathcal{L}_{\mathrm{TB}}(\theta;Z)}$ (App.~\ref{app:proof:reg-bound}). A more accurate offline partition function tightens this envelope; the exact partition-function target is unchanged.

\section{Experiments}
\label{sec:experiments}


\subsection{Setup}
\label{sec:experiments:setup}

\noindent\textbf{Backbones, baselines, corpora.} We instantiate \method on two open-weight backbones used as the reference policy $\pref$: Qwen2.5-7B~\citep{qwen25_2024} and Qwen3-4B-Base~\citep{qwen3_2025}.\footnote{Qwen2.5-7B is also a base model; the explicit ``-Base'' suffix on Qwen3-4B-Base follows Qwen's release naming.} We compare against three RL baselines run from the same backbone with the same data and RL infrastructure, so all reported gaps are attributable to the objective rather than to the data or the trainer. \textbf{GRPO}~\citep{grpo2024} and \textbf{GSPO}~\citep{gspo2025} represent the reward-maximization family of~\S\ref{sec:related}, with GSPO replacing GRPO's token-level importance ratio by a sequence-level one; \textbf{FlowRL}~\citep{flowrl2025} represents the online-coupled distribution-matching family from which \method differs only along the partition-function source---online co-trained versus offline frozen---making FlowRL the controlled comparison that isolates the effect of decoupling. Training corpora are the DAPO-17k-processed prompt set~\citep{dapo2025} for math and the DeepCoder prompt set~\citep{deepcoder2025} for code; the offline budget $\Dz$ for Stages~1--2 of \method coincides with the RL prompt set $\Dtr$, so $|\Dz|=|\Dtr|$.

\noindent\textbf{Hyperparameters.} All four methods share $G=8$ rollouts per prompt, and \method and FlowRL share the inverse-temperature $\beta=15$ on both domains. The Stage~1 proposal $\pteach$ is Qwen3-235B-A22B-Instruct-2507~\citep{qwen3_2025} for the main results, replaced by the smaller Qwen3-4B-Instruct-2507 in the proposal-strength ablation of~\S\ref{sec:experiments:robust}; we draw $N=8$ teacher samples per prompt, matching $G$ and identified as the efficiency--accuracy sweet spot by the variance--bias study in App.~\ref{app:exp:nstudy}, and aggregate them into $\log\Zh_{\mathrm{IS}}(q)$ via the \texttt{logsumexp} estimator of~\eqref{eq:m:logsumexp}. The Stage~2 regressor $g_\psi$ is a small MLP fitted by least squares to those labels. Full optimization, decoding, and \method-specific hyperparameters, together with the per-method checkpoint-selection rule that is applied identically across methods, are in Apps.~\ref{app:exp:hparams},~\ref{app:exp:disa}.

\noindent\textbf{Evaluation.} For mathematical reasoning we evaluate on AIME 2024, AIME 2025~\citep{maa2024aime,maa2025aime}, AMC 2023~\citep{maa2023amc}, MATH-500~\citep{letsverify2024}, Minerva~\citep{minerva2022}, and OlympiadBench~\citep{olympiadbench2024}, reporting \textbf{Mean@8} via the math-verify rule-based grader. For code generation we evaluate on LiveCodeBench~\citep{livecodebench2024}, CodeForces~\citep{codeforces2025}, and HumanEval+~\citep{humaneval2021} with execution-based scoring, reporting both \textbf{pass@1} and \textbf{pass@16}; pass@16 is the relevant metric for the diversity-of-correct-trajectories goal of~\S\ref{sec:intro}, since it rewards placing probability mass on multiple distinct correct programs.

\subsection{Main Results}
\label{sec:experiments:main}

\begin{table}[tb!]
\centering
\small
\vspace{-10pt}
\caption{\textbf{Mathematical reasoning, Mean@8} on AIME 2024, AIME 2025, AMC 2023, MATH-500, Minerva, and OlympiadBench for Qwen2.5-7B and Qwen3-4B-Base. Superscripts in green/red give the delta vs.\ the Vanilla baseline within each backbone block; \textbf{bold} marks the best within-block, \underline{underline} the runner-up.}
\vspace{-10pt}
\setlength{\tabcolsep}{4pt}
\providecommand{\dup}[1]{\textsuperscript{\textcolor{green!55!black}{$+$#1}}}
\providecommand{\ddown}[1]{\textsuperscript{\textcolor{red!75!black}{$-$#1}}}
\begin{tabular}{l|l|llllll}
\toprule
Method & Avg. & AIME24 & AIME25 & AMC23 & MATH500 & Minerva & Olympiad\\
\midrule
\rowcolor{black!8}
\multicolumn{8}{l}{\textit{Backbone: Qwen2.5-7B}}\\
\;\; Vanilla
    & 23.4 & 4.58 & 3.33 & 31.6 & 54.7 & 22.2 & 23.9\\
\;\; \,+\,GRPO
    & 32.8\dup{9.4}
    & \underline{13.2}\dup{8.6}
    & 7.92\dup{4.6}
    & \textbf{61.3}\dup{29.7}
    & 63.1\dup{8.4}
    & 22.4\dup{0.2}
    & 29.1\dup{5.2}\\
\;\; \,+\,GSPO
    & 29.3\dup{5.9}
    & \textbf{13.3}\dup{8.7}
    & 4.17\dup{0.8}
    & 48.1\dup{16.5}
    & 58.5\dup{3.8}
    & 22.7\dup{0.5}
    & 28.8\dup{4.9}\\
\;\; \,+\,FlowRL
    & \underline{33.2}\dup{9.8}
    & 10.0\dup{5.4}
    & \underline{9.17}\dup{5.8}
    & 50.6\dup{19.0}
    & \underline{62.7}\dup{8.0}
    & \textbf{34.1}\dup{11.9}
    & \textbf{32.4}\dup{8.5}\\
\;\; \,+\,\method(Ours)
    & \textbf{33.5}\dup{10.1}
    & 10.0\dup{5.4}
    & \textbf{13.3}\dup{10.0}
    & \underline{52.5}\dup{20.9}
    & \textbf{65.4}\dup{10.7}
    & \underline{27.9}\dup{5.7}
    & \underline{31.9}\dup{8.0}\\
\midrule
\rowcolor{black!8}
\multicolumn{8}{l}{\textit{Backbone: Qwen3-4B-Base}}\\
\;\; Vanilla
    & 29.7 & 7.92 & 6.25 & 42.6 & 64.4 & 26.7 & 30.1\\
\;\; \,+\,GRPO
    & 49.4\dup{19.7}
    & 33.3\dup{25.4}
    & 40.0\dup{33.8}
    & 70.0\dup{27.4}
    & 76.6\dup{12.2}
    & 29.0\dup{2.3}
    & 47.5\dup{17.4}\\
\;\; \,+\,GSPO
    & 62.7\dup{33.0}
    & \underline{66.9}\dup{59.0}
    & 52.1\dup{45.9}
    & 83.5\dup{40.9}
    & \underline{78.2}\dup{13.8}
    & \textbf{38.0}\dup{11.3}
    & \underline{57.2}\dup{27.1}\\
\;\; \,+\,FlowRL
    & \underline{64.8}\dup{35.1}
    & 66.7\dup{58.8}
    & \underline{53.3}\dup{47.1}
    & \underline{92.5}\dup{49.9}
    & \textbf{79.4}\dup{15.0}
    & 36.8\dup{10.1}
    & \textbf{60.1}\dup{30.0}\\
\;\; \,+\,\method(Ours)
    & \textbf{65.5}\dup{35.8}
    & \textbf{67.1}\dup{59.2}
    & \textbf{59.2}\dup{53.0}
    & \textbf{95.3}\dup{52.7}
    & 77.2\dup{12.8}
    & \underline{37.4}\dup{10.7}
    & 56.8\dup{26.7}\\
\bottomrule
\end{tabular}
\vspace{-10pt}
\label{tab:math:main}
\end{table}

\begin{table}[tb!]
\centering
\small
\caption{\textbf{Code generation, pass@1 and pass@16} on three benchmarks for Qwen2.5-7B and Qwen3-4B-Base. Superscripts in green show the absolute change of each $+$method relative to the Vanilla baseline of the same backbone; no regressions are observed in the code domain. \textbf{Bold} marks the best value within a backbone block under each metric; \underline{underline} marks the runner-up.}
\vspace{-10pt}
\setlength{\tabcolsep}{2pt}
\providecommand{\dup}[1]{\textsuperscript{\textcolor{green!55!black}{$+$#1}}}
\providecommand{\ddown}[1]{\textsuperscript{\textcolor{red!75!black}{$-$#1}}}
\begin{tabular}{l|ll|llllll}
\toprule
& \multicolumn{2}{c|}{Avg.} & \multicolumn{2}{c}{LiveCodeBench} & \multicolumn{2}{c}{CodeForces} & \multicolumn{2}{c}{HumanEval+}\\
\cmidrule(lr){2-3}\cmidrule(lr){4-5}\cmidrule(lr){6-7}\cmidrule(lr){8-9}
Method & pass@1 & pass@16 & pass@1 & pass@16 & pass@1 & pass@16 & pass@1 & pass@16\\
\midrule
\rowcolor{black!8}
\multicolumn{9}{l}{\textit{Backbone: Qwen2.5-7B}}\\
\;\; Vanilla
    & 3.81 & 25.8 & 5.60 & 21.5 & 1.38 & 11.8 & 4.45 & 44.2\\
\;\; \,+\,GRPO
    & \textbf{32.98}\dup{29.2}
    & 45.0\dup{19.2}
    & \textbf{16.85}\dup{11.3}
    & 23.4\dup{1.9}
    & \underline{9.07}\dup{7.7}
    & 20.8\dup{9.0}
    & \textbf{73.01}\dup{68.6}
    & \underline{90.8}\dup{46.6}\\
\;\; \,+\,GSPO
    & 30.17\dup{26.4}
    & 44.2\dup{18.4}
    & 14.40\dup{8.8}
    & \textbf{26.2}\dup{4.7}
    & 7.02\dup{5.6}
    & \textbf{23.0}\dup{11.2}
    & 69.10\dup{64.7}
    & 83.4\dup{39.2}\\
\;\; \,+\,FlowRL
    & 31.60\dup{27.8}
    & \textbf{46.2}\dup{20.4}
    & 14.70\dup{9.1}
    & 24.5\dup{3.0}
    & \textbf{10.78}\dup{9.4}
    & 21.5\dup{9.7}
    & 69.33\dup{64.9}
    & \textbf{92.6}\dup{48.4}\\
\;\; \,+\,\method(Ours)
    & \underline{31.80}\dup{28.0}
    & \underline{45.3}\dup{19.5}
    & \underline{15.41}\dup{9.8}
    & \underline{25.4}\dup{3.9}
    & 8.82\dup{7.4}
    & \underline{22.8}\dup{11.0}
    & \underline{71.17}\dup{66.7}
    & 87.7\dup{43.5}\\
\midrule
\rowcolor{black!8}
\multicolumn{9}{l}{\textit{Backbone: Qwen3-4B-Base}}\\
\;\; Vanilla
    & 5.85 & 23.1 & 7.48 & 24.0 & 2.31 & 17.2 & 7.76 & 28.2\\
\;\; \,+\,GRPO
    & \underline{35.33}\dup{29.5}
    & \underline{50.8}\dup{27.7}
    & 19.10\dup{11.6}
    & 30.4\dup{6.4}
    & \textbf{11.00}\dup{8.7}
    & \underline{30.1}\dup{12.9}
    & \underline{75.90}\dup{68.1}
    & \underline{92.0}\dup{63.8}\\
\;\; \,+\,GSPO
    & 35.03\dup{29.2}
    & 49.8\dup{26.7}
    & \textbf{19.60}\dup{12.1}
    & \textbf{31.9}\dup{7.9}
    & 9.90\dup{7.6}
    & 27.5\dup{10.3}
    & 75.60\dup{67.8}
    & 90.1\dup{61.9}\\
\;\; \,+\,FlowRL
    & 35.23\dup{29.4}
    & 49.2\dup{26.1}
    & 18.90\dup{11.4}
    & 28.9\dup{4.9}
    & \underline{10.90}\dup{8.6}
    & 28.6\dup{11.4}
    & 75.90\dup{68.1}
    & 90.1\dup{61.9}\\
\;\; \,+\,\method(Ours)
    & \textbf{35.47}\dup{29.6}
    & \textbf{51.5}\dup{28.4}
    & \underline{19.30}\dup{11.8}
    & \underline{30.8}\dup{6.8}
    & 10.84\dup{8.5}
    & \textbf{31.2}\dup{14.0}
    & \textbf{76.26}\dup{68.5}
    & \textbf{92.6}\dup{64.4}\\
\bottomrule
\end{tabular}
\label{tab:code:main}
\end{table}

\noindent\textbf{Mathematical reasoning.} The two distribution-matching methods \method and FlowRL lead the six-benchmark average on both backbones (Table~\ref{tab:math:main}): \method matches FlowRL on Qwen2.5-7B and is slightly ahead on Qwen3-4B-Base, while GSPO trails closely on the larger backbone and GRPO sits substantially below. This ordering is consistent with the role of the offline partition function as a substitution that should preserve, rather than redesign, the FlowRL fixed point of Prop.~\ref{prop:fixedpoint}. The gap to GRPO is most visible on Qwen3-4B-Base, where \method exceeds GRPO by $16.1$ points on the average and by larger margins on the individual contest benchmarks AIME~2024, AIME~2025, and AMC~2023. We read this gap not as evidence that \method is uniformly stronger on every benchmark---FlowRL takes Minerva on the smaller backbone and OlympiadBench on the larger one, GSPO takes Minerva and is the runner-up on AIME~2024 on the larger backbone, and GRPO takes AMC~2023 on the smaller backbone---but as evidence that fitting an explicit reward-tilted target rather than maximizing reward is the more important design choice on multi-mode mathematical reasoning, and that the offline-then-online split of \method recovers the FlowRL benefit without the partition-coupling cost discussed in~\S\ref{sec:related}.

\noindent\textbf{Code generation.} The four methods separate along the pass@1 vs.\ pass@16 axis predicted by the diversity argument of~\S\ref{sec:intro} (Table~\ref{tab:code:main}). On pass@1, where the metric rewards a single peaked correct program rather than a spread over correct programs and reward maximization is expected to be competitive, GRPO holds a small lead on Qwen2.5-7B---driven mainly by HumanEval+---with \method, FlowRL, and GSPO within $\approx 3$ points; on Qwen3-4B-Base all four methods cluster tightly inside a one-point band, with \method on top followed by GRPO, FlowRL, and GSPO. On pass@16, which rewards exactly the multi-trajectory coverage motivating~\S\ref{sec:intro}, a distribution-matching method leads on both backbones: \method on Qwen3-4B-Base, with GRPO, GSPO, and FlowRL trailing in that order; FlowRL on Qwen2.5-7B, with \method second and GRPO and GSPO behind. Taken together, the substitution of the partition-function source is performance-preserving on FlowRL's math strength and competitive with or ahead of reward maximization on the diversity-relevant pass@16 axis, with no headline regression on pass@1 either.

\subsection{Ablation Studies}
\label{sec:experiments:robust}

We assess two design choices that the offline construction exposes---the strength of the Stage~1 proposal $\pteach$ and the inverse-temperature $\beta$---and which the propositions of~\S\ref{sec:method} predict should affect offline-label quality and the partition-function-error envelope without disturbing the fixed point.

\begin{figure}[t]
\centering
\begin{subfigure}{0.45\textwidth}
  \centering
  \includegraphics[width=\textwidth]{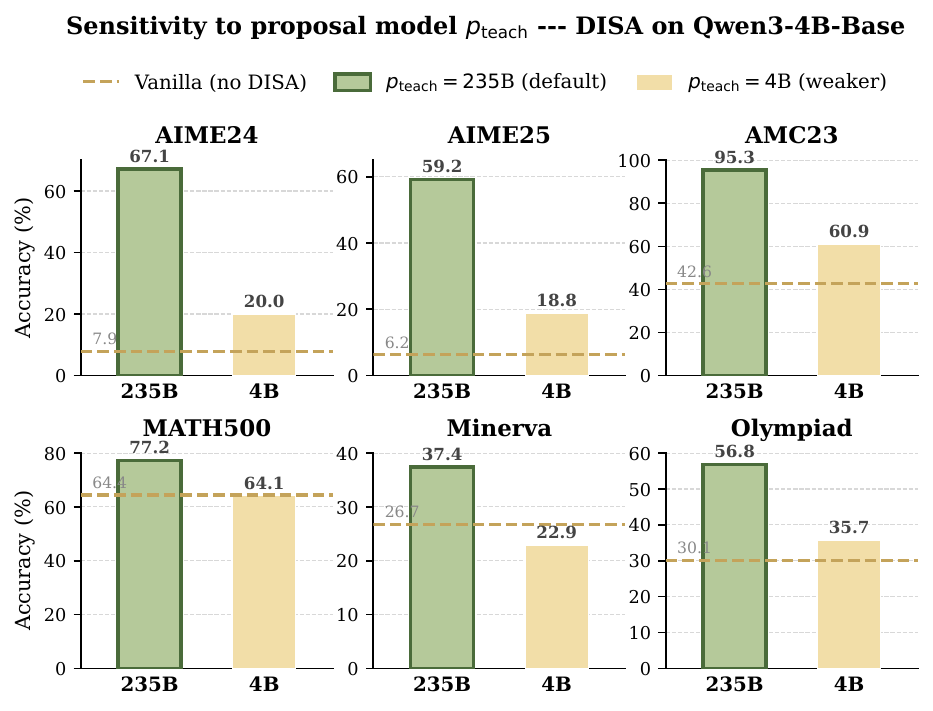}
  \captionsetup{skip=2pt}
  \subcaption{}\label{fig:weakteacher:acc}
\end{subfigure}%
\hfill
\begin{subfigure}{0.55\textwidth}
  \centering
  \includegraphics[width=\textwidth]{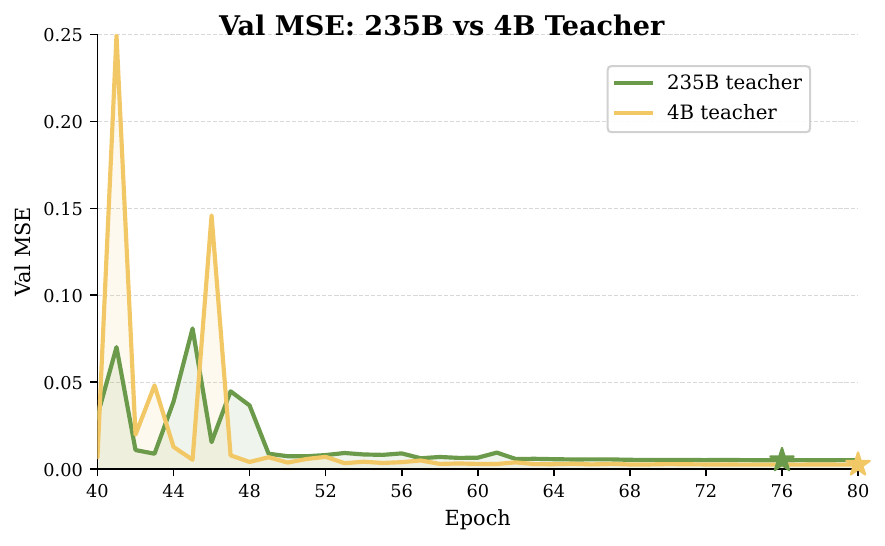}
  \captionsetup{skip=2pt}
  \subcaption{}\label{fig:weakteacher:loss}
\end{subfigure}
\caption{\textbf{Sensitivity to the proposal model $\pteach$.}
\emph{$\pteach$=235B}: default Qwen3-235B-A22B-Instruct-2507; \emph{$\pteach$=4B}: weaker Qwen3-4B-Instruct-2507.
(a)~\method on Qwen3-4B-Base under the two proposals, Mean@8 across six math benchmarks.
(b)~Validation MSE of the Stage~2 regressor $g_\psi$ across training epochs under the two proposals; stars mark the selected checkpoints.}
\vspace{-0.2cm}
\label{fig:weakteacher}
\end{figure}

\noindent\textbf{Sensitivity to $\pteach$.} We replace the default Qwen3-235B-A22B-Instruct-2507 by the $\sim\!60\times$ smaller Qwen3-4B-Instruct-2507 and rerun the entire pipeline on Qwen3-4B-Base. Figure~\ref{fig:weakteacher}(a) shows the Mean@8 average dropping from $65.5$ to $37.1$ while Figure~\ref{fig:weakteacher}(b) shows the Stage~2 regressor's terminal validation MSE remaining comparably low under both proposals. This is exactly the bias--variance pattern of Props.~\ref{prop:unbiased}--\ref{prop:reg-bound}: the weaker proposal leaves $\E[\Zh_{\mathrm{IS}}]=Z(q)$ unchanged but inflates its variance, so the accuracy gap traces to IS-label variance widening the Cauchy--Schwarz envelope rather than to a regression failure, and the exact partition-function target is unchanged. Per-benchmark deltas and the Val-MSE convergence reading are in App.~\ref{app:exp:weak-teacher-details}.

\begin{figure}[t]
\centering
\includegraphics[width=\textwidth]{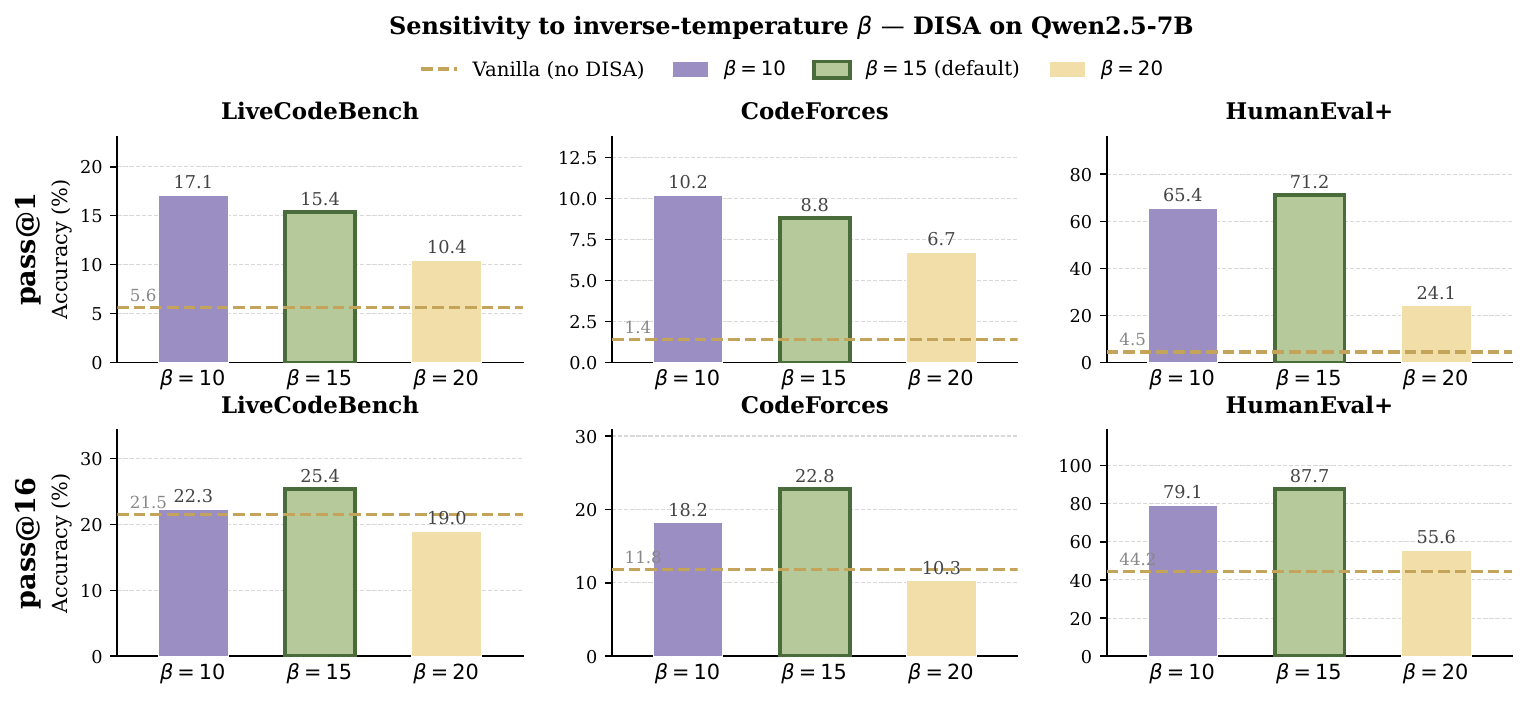}
\caption{\textbf{Sensitivity to the inverse-temperature $\beta$ on Qwen2.5-7B code.}
\method at $\beta\in\{10,15,20\}$, pass@1 on the left and pass@16 on the right, with all other hyperparameters held fixed.
The default $\beta=15$, hatched and starred, matches FlowRL; the untrained backbone is shown in gold.}
\label{fig:beta}
\end{figure}

\noindent\textbf{Sensitivity to $\beta$.} Sweeping $\beta\in\{10,15,20\}$ on Qwen2.5-7B code (Figure~\ref{fig:beta}) yields the asymmetric profile predicted by the $\mathrm{CV}^2(w)$ analysis of Prop.~\ref{prop:unbiased}: $\beta=10$ stays close to the default on pass@1 but already loses ground on pass@16, while $\beta=20$ collapses on both metrics. Under-tilting gives a plateau in which mild signal loss is absorbed by the regression; over-tilting hits a variance cliff once $\mathrm{CV}^2(w)$ becomes large enough for LSE to be dominated by a few high-weight samples, with diversity-relevant pass@16 especially sensitive. The default $\beta=15$ sits inside the plateau and well clear of the cliff, matching FlowRL~\citep{flowrl2025}; per-benchmark numbers at the two non-default settings are in App.~\ref{app:exp:beta-details}.

\section{Discussion}
\label{sec:discussion}


\subsection{Diversity of Solution Strategies}
\label{sec:discussion:diversity}

\begin{table}[tb!]
\centering
\small
\caption{\textbf{Diversity of solution strategies on AIME 2024 + AIME 2025} for Qwen2.5-7B and its RL-post-trained variants. Mean and standard deviation of GPT-4o-mini Likert scores over $60$ problems, with the right two columns breaking the score down by sub-benchmark. Red superscripts give the absolute drop vs.\ the Qwen2.5-7B backbone; \textbf{bold} marks the best within-column value, \underline{underline} the runner-up.}
\vspace{-10pt}
\setlength{\tabcolsep}{8pt}
\providecommand{\dup}[1]{\textsuperscript{\textcolor{green!55!black}{$+$#1}}}
\providecommand{\ddown}[1]{\textsuperscript{\textcolor{red!75!black}{$-$#1}}}
\begin{tabular}{l|l|llll}
\toprule
Method & Diversity $\uparrow$ & Std., $N{=}60$ & AIME 2024 & AIME 2025\\
\midrule
Qwen2.5-7B    & 3.90            & 0.30 & 3.90            & 3.90\\
\;\,+\,GRPO                      & 3.02\ddown{0.88} & 0.72 & 2.93\ddown{0.97} & 3.10\ddown{0.80}\\
\;\,+\,GSPO                      & 2.97\ddown{0.93} & 1.06 & 2.63\ddown{1.27} & 3.30\ddown{0.60}\\
\;\,+\,FlowRL                    & \underline{3.40}\ddown{0.50} & 0.95 & \underline{3.20}\ddown{0.70} & \textbf{3.60}\ddown{0.30}\\
\;\,+\,\method, ours            & \textbf{3.72}\ddown{0.18} & 0.71 & \textbf{3.83}\ddown{0.07} & \textbf{3.60}\ddown{0.30}\\
\bottomrule
\end{tabular}
\label{tab:diversity}
\end{table}

The pass@16 evidence of~\S\ref{sec:experiments:main} is an indirect measure of diversity, in that it counts whether \emph{some} of the $16$ rollouts are correct. To probe diversity directly at the strategy level, we run an independent LLM-as-judge evaluation on the $60$ problems of AIME 2024 and AIME 2025, generate $8$ rollouts per problem per method, and ask GPT-4o-mini to score the diversity of the resulting strategy mix on the $1$-to-$5$ Likert scale of~\citet{flowrl2025}; the prompt is reproduced verbatim in App.~\ref{app:exp:diversity}. Table~\ref{tab:diversity} reports the resulting per-method scores on Qwen2.5-7B.

\noindent Table~\ref{tab:diversity} admits two complementary readings. First, the untrained backbone scores highest in absolute terms, which is consistent with the fact that pretraining without RL has not yet concentrated probability mass on a small number of high-reward modes; this is the regime that all four RL methods are pushing the policy out of, and a uniform increase in average reward will, in general, decrease this strategy-level diversity score relative to the backbone. Second, conditional on having entered an RL regime, \method retains the largest fraction of the backbone's diversity with the smallest drop from the backbone score, followed by FlowRL, with the two reward-maximization methods GRPO and GSPO substantially below. The ordering matches the design intent: distribution matching is closer to the backbone than reward maximization, and the offline-decoupled variant \method is closer than the online-coupled variant FlowRL. The diversity gap to GRPO/GSPO is on the order of $0.7$ points on the $1$-to-$5$ scale, or roughly $70\%$ of one Likert level, and is large in the framework that the judge prompt itself sets up, where each Likert level corresponds to a discrete change in the number of distinct strategies present among the rollouts of a problem.

\subsection{Distillation-then-SFT vs.\ Distribution-Matching RL}
\label{sec:discussion:sft}

\begin{table}[tb!]
\centering
\small
\vspace{-10pt}
\caption{\textbf{SFT vs.\ \method on the same Stage~1 teacher trajectories.} Mean@8 across six math benchmarks for Qwen2.5-7B and Qwen3-4B-Base. Superscripts in green/red give the absolute delta vs.\ the Vanilla baseline within each backbone block; \textbf{bold} marks the best within-block value.}
\vspace{-10pt}
\setlength{\tabcolsep}{4pt}
\providecommand{\dup}[1]{\textsuperscript{\textcolor{green!55!black}{$+$#1}}}
\providecommand{\ddown}[1]{\textsuperscript{\textcolor{red!75!black}{$-$#1}}}
\begin{tabular}{l|l|llllll}
\toprule
Method & Avg. & AIME24 & AIME25 & AMC23 & MATH500 & Minerva & Olympiad\\
\midrule
\rowcolor{black!8}
\multicolumn{8}{l}{\textit{Backbone: Qwen2.5-7B}}\\
\;\; Vanilla
    & 23.4 & 6.25 & 2.92 & 30.0 & 54.7 & 23.2 & 23.2\\
\;\; \,+\,SFT
    & 27.8\dup{4.4}
    & 6.33\dup{0.1}
    & 10.0\dup{7.1}
    & 42.5\dup{12.5}
    & 60.1\dup{5.4}
    & 21.0\ddown{2.2}
    & 26.9\dup{3.7}\\
\;\; \,+\,\method, ours
    & \textbf{33.5}\dup{10.1}
    & \textbf{10.0}\dup{3.8}
    & \textbf{13.3}\dup{10.4}
    & \textbf{52.5}\dup{22.5}
    & \textbf{65.4}\dup{10.7}
    & \textbf{27.9}\dup{4.7}
    & \textbf{31.9}\dup{8.7}\\
\midrule
\rowcolor{black!8}
\multicolumn{8}{l}{\textit{Backbone: Qwen3-4B-Base}}\\
\;\; Vanilla
    & 29.7 & 7.92 & 6.25 & 42.6 & 64.4 & 26.7 & 30.1\\
\;\; \,+\,SFT
    & 47.5\dup{17.8}
    & 42.0\dup{34.1}
    & 32.7\dup{26.5}
    & 60.0\dup{17.4}
    & \textbf{74.9}\dup{10.5}
    & 30.6\dup{3.9}
    & 45.0\dup{14.9}\\
\;\; \,+\,\method, ours
    & \textbf{61.3}\dup{31.6}
    & \textbf{60.8}\dup{52.9}
    & \textbf{50.8}\dup{44.6}
    & \textbf{94.4}\dup{51.8}
    & 74.6\dup{10.2}
    & \textbf{32.9}\dup{6.2}
    & \textbf{54.0}\dup{23.9}\\
\bottomrule
\end{tabular}
\label{tab:sft}
\end{table}

\vspace{-2pt}

A natural concern about any pipeline that draws trajectories from a strong proposal is whether the RL stage adds value on top, or whether one could simply fine-tune $\pref$ on the high-reward subset of those trajectories and skip RL. We ablate by SFT-LoRA on the reward-$1$ subset of the same Stage~1 trajectories, with full hyperparameters in App.~\ref{app:exp:sft}; the two pipelines thus consume identical offline budgets and differ only in whether the trajectories are used as a supervision target or as an importance-sampling vehicle for the offline partition function of \method. Table~\ref{tab:sft} shows that SFT does improve over the backbone, but \method delivers a further $5.7$ Mean@8 points on Qwen2.5-7B and $13.8$ on Qwen3-4B-Base---i.e., on the same teacher trajectories, fitting the soft reward-tilted distribution recovers landscape information that the hard $\mathbf{1}\{r=1\}$ SFT target discards. This gap is particularly relevant to the diversity question of~\S\ref{sec:discussion:diversity}, since SFT on a single high-reward trajectory per prompt is a standard route to a peaked policy.

\section{Conclusion}
\label{sec:conclusion}

We introduced \method, a distribution-matching RL method that estimates the prompt-conditional partition function offline by importance sampling, amortizes the estimates into a frozen regressor, and substitutes this regressor for the online co-trained partition module at the policy-learning stage. The construction preserves the reward-tilted fixed point while strictly separating partition-function estimation from policy learning in data, gradients, loss, and diagnostics. Residual offline estimation error enters the policy objective only through a bounding envelope around the trajectory-balance loss. Empirically on math and code benchmarks, \method matches or exceeds the online-coupled baseline on average accuracy, is competitive with or ahead of reward-maximization baselines on the diversity-relevant multi-sample pass rate, and retains the largest fraction of the reference policy's strategy-level diversity among the methods compared. Furthermore, sensitivity studies on the proposal model and the inverse temperature track the bias and variance predictions of the analysis. For future work, we plan to extend \method to domains lacking a strong initial proposal by exploring synergistic approaches, such as combining iterative self-distillation methods with DISA to progressively bootstrap both the partition estimates and the policy exploration.

\bibliographystyle{plainnat}
\bibliography{bib/references}

\clearpage

\appendix
\section{Related Work}
\label{sec:related}

\paragraph{Reinforcement learning via reward maximization.} Proximal Policy Optimization (PPO)~\citep{ppo2017} and Group Relative Policy Optimization (GRPO)~\citep{grpo2024} dominate practice; GRPO replaces PPO's value baseline with an in-batch group baseline and underlies recent reasoning systems~\citep{deepseekr12025,kimik25_2025,dapo2025,gspo2025,srpo2025,reinforceplusplus2025,raft2023,bond2024,drgrpo2025}. Diversity-aware variants~\citep{dragrpo2025,entropymechanism2025,eighty20rule2025,reasonexplore2025} reshape the advantage to retain spread, while preference-based methods Direct Preference Optimization (DPO)~\citep{dpo2023} and Identity Preference Optimization (IPO)~\citep{ipo2024} cancel the per-prompt partition function via pairwise log-ratios over a KL-regularized reward-maximization objective. The shared property of this family is that the policy follows the gradient of the underlying Boltzmann optimum without ever instantiating it as an explicit fitting target. The reported consequence is concentration of probability mass on a single mode~\citep{prolongedrl2025,flowrl2025,dragrpo2025,entropymechanism2025}, which advantage-reshaping variants reduce but do not by construction remove; pairwise log-ratio methods optimize an ordinal preference signal rather than the reward-tilted distribution itself.

\paragraph{Reinforcement learning via distribution matching.} Distribution-matching methods make the reward-shaped target distribution explicit and train the policy to fit it, rather than only following a scalar reward gradient. The idea originates in GFlowNets~\citep{bengio2021flow} and trajectory balance~\citep{malkin2022trajectory}, where a flow network is trained so that terminal probability is proportional to reward. \citet{gflownetllm2024} apply GFlowNets to LLM fine-tuning, Flow of Reasoning~\citep{yu2024flow} extends this to multi-step reasoning, asynchronous trajectory balance~\citep{tba2025} decouples data collection from training, hybrid-flow systems~\citep{hybridflow2024} integrate flow objectives with policy gradients, and flow-matching policy gradients~\citep{flowmatchingpg2025} cast policy improvement as continuous flow matching. FlowRL~\citep{flowrl2025} is a representative LLM-scale instantiation, combining trajectory balance with length normalization, group-normalized rewards, and a PPO-style importance-sampling correction. A separate branch, LAD~\citep{lad2026}, sidesteps the partition function by minimizing an $f$-divergence against an advantage-induced distribution whose probability is proportional to within-group advantage rather than to the exponential of reward. Across the reward-tilted variants the partition function is co-trained with the policy on shared online rollouts, which entangles partition-function error in the policy gradient, leaves no separable quality signal, and forces re-fitting per run; FlowRL additionally reports sensitivity to off-policy drift and to large inverse-temperature~\citep{flowrl2025}.

\section{Algorithm}
\label{app:method:algorithm}

\begin{algorithm}[htbp]
\caption{\method: offline IS estimation, amortized denoising, and frozen-anchored RL.}
\label{alg:decrl}
\small
\begin{algorithmic}[1]
\State \textbf{Inputs:} reference $\pref$, proposal $\pteach$, reward transform $\tilde r$, inverse temperature $\beta$, offline budget $\Dz$, training set $\Dtr$, samples-per-prompt $N$.
\Statex \textbf{// Stage 1 (offline): IS estimation of $\log Z(q)$.}
\For{each $q\in\Dz$}
  \State sample $\{o_i\}_{i=1}^{N}\sim\pteach(\cdot\mid q)$; score rewards and record $\log\pref(o_i\!\mid\! q)$, $\log\pteach(o_i\!\mid\! q)$.
  \State compute transformed rewards $\tilde r(q,o_i)$; in implementation, use the finite-batch normalized plug-in counterpart.
  \State $\log\Zh_{\mathrm{IS}}(q)\gets\Lis_i\!\big(\log\pref-\log\pteach+\beta \tilde r(q,o_i)\big)-\log N$.
\EndFor
\Statex \textbf{// Stage 2 (offline): amortized denoising.}
\State fit $g_{\psi^\star}\gets\arg\min_\psi\,\E_{q\in\Dz}[(g_\psi(q)-\log\Zh_{\mathrm{IS}}(q))^2]$; freeze $\psi^\star$.
\Statex \textbf{// Stage 3 (online): RL with frozen anchoring.}
\State initialize $\theta\gets$ ref-policy parameters.
\For{each RL step}
  \State sample $q\in\Dtr$; rollout $\{o_i\}\sim\pold$; compute token-level PPO/GRPO ratios, transformed rewards, and any rollout weights $\omega_i$ used in the residual.
  \State query $g_{\psi^\star}(q)$ with stop-gradient on $\psi$; take a gradient step on $\mathcal{L}_{\method}(\theta)$ from \eqref{eq:m:decrl}.
\EndFor
\State \Return $\theta^\star$.
\end{algorithmic}
\end{algorithm}

This appendix also proves the formal statements used in Sec.~\ref{sec:method}. The proofs are stated for the trajectory-level TB residual in~\eqref{eq:bg:flowrl}--\eqref{eq:m:decrl} and for a fixed prompt-conditional affine reward transform $\tilde r(q,o)=a_q r(q,o)+b_q$, $a_q>0$. Finite-batch group normalization in implementation is a plug-in approximation to this fixed transform and contributes to the partition-function error term analyzed in App.~\ref{app:proof:reg-bound}.

\section{Proofs}
\label{app:proofs}

\subsection{Linear-scale unbiasedness of \texorpdfstring{$\Zh_{\mathrm{IS}}$}{Z\_IS}}
\label{app:proof:unbiased}

\begin{proposition}[Linear-scale unbiasedness]\label{prop:unbiased}
Fix a prompt $q$ and a fixed transformed reward $\tilde r(q,\cdot)$. Suppose $\pteach(o\mid q)>0$ whenever $\pref(o\mid q)\exp(\beta\tilde r(q,o))>0$. Then the estimator in~\eqref{eq:m:is} satisfies $\E[\Zh_{\mathrm{IS}}(q)]=Z(q)$ for every sample size $N\geq 1$. If $\mathrm{Var}_{\pteach}(w(q,\cdot))<\infty$, the log-space estimator in~\eqref{eq:m:logsumexp} is Jensen-biased downward with second-order asymptotic bias $\mathrm{CV}^2(w)/(2N)$.
\end{proposition}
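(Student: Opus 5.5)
The plan has two parts, matching the two claims: an exact change-of-measure computation for the linear-scale estimator, and a second-order delta-method expansion for its logarithm.

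\textbf{Unbiasedness.} Fix $q$ and write $w(o)=w(q,o)$. I will first verify the single-sample identity $\E_{o\sim\pteach}[w(o)]=Z(q)$. Split $\mathcal{O}$ into $S=\{o:\pref(o\mid q)\exp(\beta\tilde r(q,o))>0\}$ and its complement. On $S$ the absolute-continuity hypothesis gives $\pteach(o\mid q)>0$, so the ratio is well defined and $\pteach\cdot w=\pref e^{\beta\tilde r}$ termwise. Off $S$ the summand of $Z(q)$ vanishes. Where $\pteach>0$ but $\pref=0$ we have $w=0$, and points with $\pteach=0$ contribute nothing to the expectation. Hence
\[
\E_{\pteach}[w]=\sum_{o\in S}\pteach(o\mid q)\,\frac{\pref(o\mid q)}{\pteach(o\mid q)}e^{\beta\tilde r(q,o)}=Z(q).
\]
Since all terms are nonnegative, Tonelli justifies the interchange even for countably infinite $\mathcal{O}$. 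By linearity over the i.i.d.\ draws $o_1,\dots,o_N$ we get $\E[\Zh_{\mathrm{IS}}(q)]=Z(q)$ for every $N\ge1$.

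\textbf{Jensen bias of the log.} By \eqref{eq:m:logsumexp}, the log-space estimator equals $\log\Zh_{\mathrm{IS}}(q)$ exactly. Because $\log$ is strictly concave, Jensen's inequality gives
\[
\E[\log\Zh_{\mathrm{IS}}]\le\log\E[\Zh_{\mathrm{IS}}]=\log Z(q).
\]
So the bias is downward, and strictly so unless $w$ is a.s.\ constant. For the rate, set $\Zh_{\mathrm{IS}}=Z(1+\delta)$ with $\delta=\frac1N\sum_i(w(o_i)/Z-1)$. Then $\E\delta=0$ and $\E\delta^2=\mathrm{Var}(w)/(NZ^2)=\mathrm{CV}^2(w)/N$. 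Expanding $\log(1+\delta)=\delta-\delta^2/2+R(\delta)$ gives
\[
\E[\log\Zh_{\mathrm{IS}}]-\log Z=-\frac{\mathrm{CV}^2(w)}{2N}+\E[R(\delta)],
\]
and it remains to show $\E[R(\delta)]=o(1/N)$.

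\textbf{Main obstacle: the remainder.} Only a finite second moment is assumed, and $\log(1+\delta)$ is unbounded as $\delta\to-1$, so I expect controlling $\E[R(\delta)]$ to be the hard step. I will split on the event $A=\{|\delta|\le\tfrac12\}$.

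On $A$, $|R(\delta)|\le C|\delta|^3\le C\varepsilon\delta^2+C\delta^2\mathbf 1\{|\delta|>\varepsilon\}$ for any $\varepsilon\le\tfrac12$. Since $\sqrt N\delta$ is asymptotically normal with uniformly integrable squares (Lindeberg), $N\E[\delta^2\mathbf 1\{|\delta|>\varepsilon\}]\to0$; letting $\varepsilon\to0$ then gives $o(1/N)$ on $A$.

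On $A^c$ the quadratic part is again $o(1/N)$ by the same uniform integrability. The $\log$ term has an upper tail bounded by $\delta$ and is handled the same way. The genuinely delicate piece is the lower tail $\Zh_{\mathrm{IS}}\to0$. There I will use $\Pr(A^c)=o(1/N)$, from Chebyshev together with uniform integrability, plus an integrability condition on $\log w$ near $0$. Since $\tilde r$ is bounded (binary $r$) and trajectories have finite length, $\log w$ is bounded below on the support in the paper's setting, which makes $\E[|\log\Zh_{\mathrm{IS}}|\mathbf 1_{A^c}]=o(1/N)$.

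If that lower-tail bound proves awkward to state cleanly, I will instead read the statement's ``second-order asymptotic bias'' as the leading term of the delta-method expansion and state the remainder control under that explicit boundedness assumption.
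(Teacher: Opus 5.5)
Your proposal follows the paper's proof in structure (change of measure plus linearity for unbiasedness, then Jensen and a second-order expansion of $\log(1+\delta)$ yielding $\mathrm{CV}^2(w)/(2N)$). The paper stops at the formal ``$\approx$'' and never bounds the remainder, so your uniform-integrability control of $\E[R(\delta)]$ adds rigor the paper does not supply, and you are right that finite variance alone cannot deliver it.

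One caution for that extra step: it also needs $w>0$ $\pteach$-a.s., i.e.\ $\pref>0$ wherever $\pteach>0$. This holds for a softmax reference on finite-length outputs, but it is not implied by the stated one-sided support condition. An atom of $w$ at $0$ gives $\Pr(\Zh_{\mathrm{IS}}=0)>0$ and hence $\E[\log\Zh_{\mathrm{IS}}(q)]=-\infty$ for every $N$.
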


\begin{proof}
Let
\begin{equation*}
w(q,o) \;=\; \frac{\pref(o\mid q)}{\pteach(o\mid q)}\,\exp(\beta\,\tilde r(q,o)),
\qquad
\Zh_{\mathrm{IS}}(q) \;=\; \frac1N\sum_{i=1}^{N}w(q,o_i),
\quad o_i\stackrel{\mathrm{i.i.d.}}{\sim}\pteach(\cdot\mid q).
\end{equation*}
By change of measure,
\begin{align*}
\E_{o\sim\pteach}[w(q,o)]
&= \sum_{o\in\mathcal{O}} \frac{\pref(o\mid q)}{\pteach(o\mid q)}\exp(\beta\tilde r(q,o))\pteach(o\mid q)\\
&= \sum_{o\in\mathcal{O}}\pref(o\mid q)\exp(\beta\tilde r(q,o))
= Z(q),
\end{align*}
where the ratio is well-defined on the support that matters by the absolute-continuity assumption. Linearity gives $\E[\Zh_{\mathrm{IS}}(q)]=Z(q)$.

For the log-space estimator, Jensen's inequality gives $\E[\log\Zh_{\mathrm{IS}}(q)]\leq \log Z(q)$. A second-order delta-method expansion of $\log x$ around $x=Z(q)$ yields
\begin{equation*}
\log Z(q)-\E[\log\Zh_{\mathrm{IS}}(q)]
\;\approx\; \frac{1}{2}\frac{\mathrm{Var}(\Zh_{\mathrm{IS}}(q))}{Z(q)^2}
\;=\; \frac{1}{2N}\frac{\mathrm{Var}_{\pteach}(w(q,\cdot))}{Z(q)^2}
\;=\; \frac{\mathrm{CV}^2(w)}{2N}.
\end{equation*}
\end{proof}

\paragraph{Remarks.}
Unbiasedness is a statement about the linear-scale estimator with fixed $\tilde r$. When $\tilde r$ is replaced by finite-batch normalization statistics computed from the same sampled group, the weights are no longer i.i.d. single-sample functions; the resulting plug-in error is part of the finite-sample partition-function error rather than the exact unbiasedness claim. The variance of $\Zh_{\mathrm{IS}}$ is $\mathrm{Var}_{\pteach}(w)/N$, so proposal quality controls the noise of the offline labels.

\subsection{Bias of the geometric-mean estimator}
\label{app:proof:gm}

\begin{proposition}[Geometric-mean bias]\label{prop:gm-bias}
For any non-degenerate $w(q,o)$ with $\E_{o\sim\pteach}[w]=Z(q)<\infty$ and $\E_{o\sim\pteach}[\log w]>-\infty$, define $\log\Zh_{\mathrm{GM}}(q)=\tfrac1N\sum_{i=1}^N\log w(q,o_i)$. Then
\begin{equation*}
\E_{o\sim\pteach}[\log w(q,o)] \;\leq\; \log\E_{o\sim\pteach}[w(q,o)] \;=\; \log Z(q),
\end{equation*}
with equality iff $w(q,o)$ is constant $\pteach$-a.s. Consequently $\E[\log\Zh_{\mathrm{GM}}(q)]\leq\log Z(q)$, with a second-order bias of order $\mathrm{CV}^2(w)/2$ that does not vanish with $N$.
\end{proposition}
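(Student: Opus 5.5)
The plan is to apply Jensen's inequality to the concave function $\log$ under $\pteach(\cdot\mid q)$, and then handle the $N$-sample estimator by linearity of expectation.

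\textbf{Step 1: the main inequality and its equality case.} Write $W=w(q,o)$ with $o\sim\pteach(\cdot\mid q)$. Then $W>0$ almost surely, since $\E[\log W]>-\infty$. We also have $\E[W]=Z(q)<\infty$, which follows from the change-of-measure computation in the proof of Prop.~\ref{prop:unbiased}.

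Since $\log$ is strictly concave on $(0,\infty)$, Jensen gives $\E[\log W]\le\log\E[W]=\log Z(q)$. The integrability hypotheses ensure both sides are well defined: $\E[\log W]\in(-\infty,\log Z(q)]$.

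For the equality case, use the tangent-line argument. Set $m=\E[W]$. For all $x>0$,
\begin{equation*}
\log x\le\log m+(x-m)/m .
\end{equation*}
Strict concavity makes this inequality strict unless $x=m$. Taking expectations, equality forces $\log W=\log m+(W-m)/m$ almost surely, hence $W=m$ $\pteach$-a.s. The converse direction is trivial.

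\textbf{Step 2: extend to the $N$-sample estimator.} Because the $o_i$ are i.i.d. from $\pteach$, linearity of expectation gives
\begin{equation*}
\E[\log\Zh_{\mathrm{GM}}(q)]=\tfrac1N\sum_i\E[\log w(q,o_i)]=\E[\log W].
\end{equation*}
This quantity is independent of $N$. Step~1 then yields $\E[\log\Zh_{\mathrm{GM}}(q)]\le\log Z(q)$. Moreover, the gap $\log Z(q)-\E[\log W]$ is exactly the same for every $N$, so it does not vanish as $N\to\infty$. Under non-degeneracy the gap is strictly positive by the equality case.

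\textbf{Step 3: second-order size of the bias.} Mirror the delta-method expansion used for Prop.~\ref{prop:unbiased}, but apply it to a single weight $W$ rather than to the average. Expanding $\log W$ to second order around $Z(q)$,
\begin{equation*}
\log W\approx\log Z+\frac{W-Z}{Z}-\frac{(W-Z)^2}{2Z^2}.
\end{equation*}
Taking expectations gives $\log Z-\E[\log W]\approx\mathrm{Var}(W)/(2Z^2)=\mathrm{CV}^2(w)/2$. The contrast with Prop.~\ref{prop:unbiased} is that there the variance is divided by $N$ because the averaging happens before the logarithm; here it happens after.

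\textbf{Main obstacle.} The only delicate point is this Step~3 approximation. It is heuristic: it needs $W$ concentrated near $Z$, or finite higher moments, to control the remainder. I would state it, as the paper does for Prop.~\ref{prop:unbiased}, as a second-order (small-$\mathrm{CV}$) approximation rather than a rigorous bound.

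Optionally, one could make the result rigorous with a remainder bound via the integral form of Taylor's theorem, assuming $W$ is bounded below by a positive constant. The non-vanishing-in-$N$ claim itself is already exact from Step~2 and needs no expansion.
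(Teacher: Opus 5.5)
Your proposal is correct and follows the paper's proof essentially step for step. It uses Jensen for the concave $\log$, then linearity of expectation to show $\E[\log\Zh_{\mathrm{GM}}(q)]=\E_{o\sim\pteach}[\log w(q,o)]$ independently of $N$, then the same second-order expansion around $Z(q)$ giving $\mathrm{CV}^2(w)/2$. Your tangent-line argument for the equality case, and your remark that the non-vanishing-in-$N$ claim is exact rather than a product of the heuristic expansion, slightly sharpen the paper's one-line appeals to strict concavity and to the $N$-independence of the approximation.
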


\begin{proof}
The first inequality is Jensen's inequality for the concave function $\log$. Taking expectation in $\log\Zh_{\mathrm{GM}}(q)=\tfrac1N\sum_i\log w(q,o_i)$ gives
\begin{equation*}
\E[\log\Zh_{\mathrm{GM}}(q)] = \E_{o\sim\pteach}[\log w(q,o)] \leq \log Z(q).
\end{equation*}
Strictness follows from strict concavity unless $w$ is constant almost surely. A second-order expansion of $\log w$ around its mean gives
\begin{equation*}
\log Z(q)-\E[\log\Zh_{\mathrm{GM}}(q)]
\;\approx\; \frac{1}{2}\frac{\mathrm{Var}_{\pteach}(w(q,\cdot))}{Z(q)^2}
\;=\;\frac12\mathrm{CV}^2(w),
\end{equation*}
which is independent of $N$. Comparing with Prop.~\ref{prop:unbiased}, the logsumexp estimator reduces this second-order log bias by a factor of $N$.
\end{proof}

\subsection{Fixed point with a frozen partition function}
\label{app:proof:fixedpoint}

\begin{proposition}[Fixed-point family of \method]\label{prop:fixedpoint}
Fix a prompt $q$ and a fixed transformed reward $\tilde r(q,o)=a_qr(q,o)+b_q$ with $a_q>0$. Assume the policy class contains all full-support distributions over $\mathcal{O}$. Let
\begin{equation*}
R(o;\theta)=g_{\psi^\star}(q)+\log\ppol(o\mid q)-\log\pref(o\mid q)-\beta\tilde r(q,o),
\end{equation*}
and let $L(\theta\mid q;\mu)=\E_{o\sim\mu(\cdot\mid q)}[R(o;\theta)^2]$ for a full-support behavior distribution $\mu$.
\textit{(i) Exact partition function:} If $g_{\psi^\star}(q)=\log Z(q)$, then $\ppol(o\mid q)=\pref(o\mid q)\exp(\beta\tilde r(q,o))/Z(q)$ is the unique global minimizer of $L(\theta\mid q;\mu)$ for any $\theta$-independent full-support $\mu$.
\textit{(ii) Prompt-only partition-function bias:} If $g_{\psi^\star}(q)=\log Z(q)+\eta(q)$ with $\eta(q)$ independent of $o$, then the same target is a stationary point of the on-policy gradient flow for $\E_{o\sim\ppol(\cdot\mid q)}[R(o;\theta)^2]$.
\end{proposition}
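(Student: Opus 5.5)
For part (i), I will show that the target makes the residual vanish identically, so it attains the smallest possible value of a nonnegative loss, and then show that no other policy does.

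Since $L(\theta\mid q;\mu)=\E_{o\sim\mu}[R(o;\theta)^2]\geq 0$, any policy with $R(o;\theta)=0$ for all $o$ is a global minimizer. Substitute $g_{\psi^\star}(q)=\log Z(q)$ and the target $\ppol^\star(o\mid q)=\pref(o\mid q)e^{\beta\tilde r(q,o)}/Z(q)$ into $R$. This gives $R(o;\theta^\star)=\log Z+\log\pref+\beta\tilde r-\log Z-\log\pref-\beta\tilde r=0$ pointwise.

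The target lies in the policy class because it has full support. This uses $\pref$ full support (implicit, since $\log\pref$ must be finite) and $e^{\beta\tilde r}>0$.

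For uniqueness, suppose $L(\theta\mid q;\mu)=0$. Because $\mu$ has full support and is $\theta$-independent, this forces $R(o;\theta)=0$ for every $o$. Hence $\ppol(o\mid q)=\pref(o\mid q)e^{\beta\tilde r(q,o)}/e^{g_{\psi^\star}(q)}$, which equals the target exactly because $g_{\psi^\star}(q)=\log Z(q)$. So the minimizing distribution is unique; uniqueness is at the level of distributions, not parameters $\theta$.

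The $\theta$-independence of $\mu$ is what lets me argue purely pointwise. Otherwise minimizing over $\theta$ would also move the weighting, and I would have to rule out degenerate reweighting.

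For part (ii), I parametrize the policy directly by logits $\ell(o)$ with $\ppol(o)=e^{\ell(o)}/\sum_{o'}e^{\ell(o')}$. Any differentiable parametrization then follows by the chain rule, since stationarity in logits implies zero gradient in $\theta$.

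The on-policy objective is $J(\theta)=\sum_o\ppol(o)R(o;\theta)^2$. The key structural fact is that at the target $R(o;\theta^\star)=\eta(q)$ for every $o$: the residual is constant in $o$, because the prompt-only bias does not depend on $o$.

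I will differentiate $J$ with $\pi$ as sampling weight inside the gradient, as in the on-policy flow:
\begin{equation*}
\nabla J=\sum_o \nabla\ppol(o)\,R(o)^2+2\sum_o\ppol(o)R(o)\nabla\log\ppol(o).
\end{equation*}
I then use the score identity $\sum_o\ppol\nabla\log\ppol=\nabla\sum_o\ppol=0$. With $R\equiv\eta$, the first term equals $\eta^2\nabla\sum_o\ppol=0$, and the second equals $2\eta\sum_o\ppol\nabla\log\ppol=0$. Therefore $\nabla J(\theta^\star)=0$ and the target is a stationary point.

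The same holds for the surrogate gradient that stops gradients through the sampling distribution: it keeps only the second term, which also vanishes.

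The main obstacle is specifying which gradient flow is meant. The statement says ``on-policy gradient flow,'' which could mean the full gradient of $J$ or the surrogate that treats samples as fixed. I will handle both explicitly as above, since both vanish for the same reason: a constant residual is orthogonal to every score direction.

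I will also remark that global minimality fails. The minimum of $J$ over policies is not at the target when $\eta\neq 0$, since $J(\theta^\star)=\eta^2>0$. This matches the paper's claim that stationarity, not global minimality, is preserved.
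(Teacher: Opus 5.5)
Your argument for both parts is correct and matches the paper's proof essentially step for step. In (i) you use nonnegativity of the loss plus pointwise vanishing of $R$ under a full-support, $\theta$-independent $\mu$; in (ii) you use the two-term on-policy gradient $\E_{\ppol}[R^2\nabla\log\ppol]+2\E_{\ppol}[R\,\nabla\log\ppol]$, both terms killed by the score identity because $R\equiv\eta(q)$ at the target, and your treatment of the stop-gradient surrogate is a harmless extra.

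One correction, to your closing aside only: $J(\theta^\star)=\eta^2>0$ does not show that the target fails to be a global minimizer. Writing $J=\E_{\ppol}[(\log\ppol-\log\tilde\pi)^2]+2\eta\,\KL(\ppol\,\|\,\tilde\pi)+\eta^2$ shows the target is still the unique global minimizer when $\eta(q)>0$, so non-minimality requires $\eta(q)<0$, as in the paper's $\eta=-2$ counterexample.
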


\begin{proof}
Define $\tilde\pi_{\tilde r}(o\mid q)=\pref(o\mid q)\exp(\beta\tilde r(q,o))/Z(q)$.

For part~(i), substituting $g_{\psi^\star}=\log Z(q)$ gives
\begin{equation*}
R(o;\theta)=\log\ppol(o\mid q)-\log\tilde\pi_{\tilde r}(o\mid q).
\end{equation*}
Thus $L(\theta\mid q;\mu)\geq 0$, with equality iff $\log\ppol(o\mid q)=\log\tilde\pi_{\tilde r}(o\mid q)$ for $\mu$-almost every $o$. Full support and normalization imply $\ppol=\tilde\pi_{\tilde r}$. Since $\tilde r=a_qr+b_q$, the target is proportional to $\pref e^{\beta a_q r}$; the prompt-only factor $e^{\beta b_q}$ is absorbed by $Z(q)$.

For part~(ii), at $\ppol=\tilde\pi_{\tilde r}$ the residual is the constant $R(o;\theta)=\eta(q)$. The gradient of the on-policy objective is
\begin{equation*}
\nabla_\theta \E_{o\sim\ppol}[R(o;\theta)^2]
=\E_{o\sim\ppol}[R(o;\theta)^2\nabla_\theta\log\ppol(o\mid q)]
+\E_{o\sim\ppol}[2R(o;\theta)\nabla_\theta R(o;\theta)].
\end{equation*}
Because $\nabla_\theta R=\nabla_\theta\log\ppol$, evaluating at $\ppol=\tilde\pi_{\tilde r}$ gives
\begin{equation*}
(\eta(q)^2+2\eta(q))\,\E_{o\sim\tilde\pi_{\tilde r}}[\nabla_\theta\log\ppol(o\mid q)] = 0
\end{equation*}
by the score-function identity. Hence the target is stationary under on-policy sampling.
\end{proof}

\subsection{Regression-error bound}
\label{app:proof:reg-bound}

\begin{proposition}[Cauchy--Schwarz envelope on partition-function-error penalty]\label{prop:reg-bound}
Let $\eta(q)=g_{\psi^\star}(q)-\log Z(q)$ and $\sigma_g^2=\E_{q\sim\Dtr}[\eta(q)^2]$. Consider the formal trajectory-level residual $R(q,o;\theta)=\log Z(q)+\log\ppol(o\mid q)-\log\pref(o\mid q)-\beta\tilde r(q,o)$ with nonnegative rollout weights whose conditional expectation sums to one. Then, for every $\theta$,
\begin{equation*}
\big|\mathcal{L}_{\method}(\theta)-\mathcal{L}_{\mathrm{TB}}(\theta;Z)\big|
\;\leq\; \sigma_g^2 + 2\sigma_g\sqrt{\mathcal{L}_{\mathrm{TB}}(\theta;Z)}.
\end{equation*}
If an implementation uses additional bounded rollout weights, the same argument gives the same form up to constants determined by the weight bound.
\end{proposition}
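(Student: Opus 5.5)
The argument is pointwise in $(q,o)$ and then uses Cauchy--Schwarz. Write the \method residual as $R(q,o;\theta)+\eta(q)$, where $\eta(q)=g_{\psi^\star}(q)-\log Z(q)$. This holds because the two residuals differ only in the partition-function term, and $\eta$ depends on $q$ alone. Squaring gives the exact pointwise identity
\begin{equation*}
\big(R+\eta\big)^2-R^2 \;=\; \eta(q)^2 + 2\,\eta(q)\,R(q,o;\theta).
\end{equation*}
Multiply by the rollout weight $\omega_i$, average over the group, and take the outer expectation over $(q,\{o_i\})$. This yields
\begin{equation*}
\mathcal{L}_{\method}(\theta)-\mathcal{L}_{\mathrm{TB}}(\theta;Z) \;=\; \E\Big[\tfrac1G\textstyle\sum_i\omega_i\,\eta(q)^2\Big] + 2\,\E\Big[\tfrac1G\textstyle\sum_i\omega_i\,\eta(q)R(q,o_i;\theta)\Big].
\end{equation*}
Here $\mathcal{L}_{\mathrm{TB}}(\theta;Z)$ means the same weighted objective with the exact $\log Z$, i.e.\ the FlowRL-form loss evaluated at the exact partition function.

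For the first term, condition on $q$. The assumption that the weights' conditional expectations sum to one (normalized so that $\E[\tfrac1G\sum_i\omega_i\mid q]=1$) reduces it to $\E_q[\eta(q)^2]=\sigma_g^2$. For the cross term, I apply the triangle inequality and then Cauchy--Schwarz with respect to the joint weighted measure $\nu$ on $(q,o)$ induced by $\tfrac1G\sum_i\omega_i$ times the sampling law. Because $\omega_i\ge 0$ and the total mass is one, $\nu$ is a probability measure. Therefore
\begin{equation*}
\big|\E_\nu[\eta R]\big|\le \sqrt{\E_\nu[\eta^2]}\,\sqrt{\E_\nu[R^2]}=\sigma_g\sqrt{\mathcal{L}_{\mathrm{TB}}(\theta;Z)},
\end{equation*}
where $\E_\nu[\eta^2]=\sigma_g^2$ by the same conditioning as before. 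Combining the two terms with $|a+b|\le|a|+|b|$ gives the stated envelope. For the final sentence of the statement, if the weights are only bounded by some $W$ and not normalized, the same chain goes through with $\sigma_g^2$ replaced by $W\sigma_g^2$ in the first term and a $\sqrt{W}$ factor appearing in the cross term.

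The main obstacle is the bookkeeping of measures rather than any inequality. Two points need care. First, $\sigma_g^2$ is defined under $q\sim\Dtr$, so I must check that the prompt marginal of $\nu$ is exactly $\Dtr$. This uses that the weights integrate to one conditionally on $q$, so the weighting does not reweight prompts. Second, $\mathcal{L}_{\mathrm{TB}}(\theta;Z)$ must be interpreted as the same $\nu$-weighted loss; otherwise Cauchy--Schwarz does not close. I would make both conventions explicit at the start, and the rest is routine.
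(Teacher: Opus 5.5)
Your proposal is correct and follows the paper's proof essentially step for step. It uses the same pointwise expansion $(R+\eta)^2-R^2=\eta^2+2\eta R$, the unit-conditional-mean weight condition that reduces the first term to $\sigma_g^2$, and weighted Cauchy--Schwarz on the cross term. Your explicit checks that the weighted measure has prompt marginal $\Dtr$, and that $\mathcal{L}_{\mathrm{TB}}(\theta;Z)$ must be read as the same weighted loss, make precise two conventions that the paper's proof uses only implicitly.
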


\begin{proof}
Writing the rollout weights as $\alpha_i\geq0$ with $\E[\sum_i\alpha_i\mid q]=1$, the residual under the frozen partition function is $R_i+\eta(q)$. Therefore
\begin{equation*}
\mathcal{L}_{\method}-\mathcal{L}_{\mathrm{TB}}
=\E\!\Big[\sum_i\alpha_i\big(2R_i\eta+\eta^2\big)\Big].
\end{equation*}
The unit-mean condition gives
\begin{equation*}
\E\!\Big[\sum_i\alpha_i\eta(q)^2\Big]=\E_q[\eta(q)^2]=\sigma_g^2.
\end{equation*}
For the cross term, weighted Cauchy--Schwarz yields
\begin{equation*}
\Big|\E\!\Big[\sum_i\alpha_i R_i\eta\Big]\Big|
\leq
\sqrt{\E\!\Big[\sum_i\alpha_i R_i^2\Big]\,\E\!\Big[\sum_i\alpha_i\eta^2\Big]}
=\sigma_g\sqrt{\mathcal{L}_{\mathrm{TB}}(\theta;Z)}.
\end{equation*}
Combining the terms proves the bound. If the rollout weights are bounded but not exactly unit-mean, the same calculation multiplies the two terms by the corresponding weight-bound constants.
\end{proof}

\subsection{On-policy sampling and the bias counterexample}
\label{app:on-policy}
\label{app:counterexample}

\paragraph{Why on-policy sampling is essential for Proposition~\ref{prop:fixedpoint}(ii).} Part~(ii) relies on on-policy sampling: the score-function identity $\E_{\ppol}[\nabla\log\ppol]=0$ eliminates the prompt-only bias $\eta(q)$. Off-policy with a fixed $\mu\neq\ppol$, the analogous gradient at $\ppol=\tilde\pi$ contains $2\eta(q)\,\E_\mu[\nabla\log\ppol]$, which is generally nonzero. Thus a biased frozen partition function preserves stationarity only in the on-policy sense; exact partition-function recovery is required for the global-minimizer statement.

\paragraph{Why a zero-mean residual is insufficient.} A prompt-only bias does not preserve the global minimizer in general, even if such biases average to zero across prompts. The example below shows that the correct invariant for biased frozen partition functions is on-policy stationarity, not global optimality.

Fix one prompt $q$ with output space $\mathcal{O}=\{0,1\}$. Take $\pref(0\mid q)=\pref(1\mid q)=1/2$, reward $r(q,0)=\log 2$, reward $r(q,1)=0$, and $\beta=1$. Then $\tilde\pi(0\mid q)=2/3$, $\tilde\pi(1\mid q)=1/3$, and $\log Z(q)=\log(3/2)$. Suppose the regressor returns $g_{\psi^\star}(q)=\log Z(q)+\eta(q)$ with $\eta(q)=-2$.

The per-trajectory residual is
\begin{equation*}
R(o;\theta)=\eta(q)+\log\ppol(o\mid q)-\log\tilde\pi(o\mid q).
\end{equation*}
Parameterize $\ppol(0\mid q)=p$. Direct computation of the on-policy loss $L(p)=\E_{o\sim\ppol}[R(o;\theta)^2]$ gives
\begin{align*}
L(2/3) &= \eta^2 = 4,\\
L(0.9) &= 0.9\big(-2+\log\tfrac{0.9}{2/3}\big)^2 + 0.1\big(-2+\log\tfrac{0.1}{1/3}\big)^2 \approx 3.63 < 4.
\end{align*}
Thus a policy different from $\tilde\pi$ achieves lower on-policy squared residual than $\tilde\pi$ itself. The mechanism is the cross term
\begin{equation*}
L(p)=\E_{\ppol}\big[(\log\ppol-\log\tilde\pi)^2\big]+2\eta(q)\KL(\ppol\,\|\,\tilde\pi)+\eta(q)^2,
\end{equation*}
which can decrease the loss when $\eta(q)<0$. This does not contradict Prop.~\ref{prop:fixedpoint}: the gradient at $\ppol=\tilde\pi$ still vanishes under on-policy sampling, but the point need not be a global minimizer when the partition function has prompt-only bias.

\section{Experiment Details}
\label{app:exp}

\subsection{Training and evaluation hyperparameters}
\label{app:exp:hparams}

The four methods \method, FlowRL, GRPO, and GSPO share the configuration in Table~\ref{tab:app:hparams:rl}; differences across methods are confined to the loss function and to the auxiliary partition function of FlowRL/\method.

\begin{table}[h]
\centering
\caption{Shared RL training and evaluation configuration.}
\label{tab:app:hparams:rl}
\small
\begin{tabular}{ll}
\toprule
\textit{Hardware and trainer} & \\
GPUs & $16$ $\times$ A100 80GB, over two nodes\\
Trainer & veRL~\citep{hybridflow2024}, with dynamic batch sizing\\
\midrule
\textit{Optimization} & \\
Optimizer / learning rate & AdamW / $1\!\times\!10^{-6}$\\
Global batch size & $128$ prompts\\
Rollouts per prompt, $G$ & $8$\\
\verb|max_prompt_length| & $2048$\\
\verb|max_response_length| & $16384$\\
Inverse-temperature $\beta$ for \method and FlowRL & $15$\\
\midrule
\textit{Evaluation for math and code} & \\
Sampling temperature & $0.6$\\
Top-$p$ & $0.95$\\
Response length & $32{,}768$\\
Math metric & Mean@8 with math-verify on boxed final answer\\
Code metric & pass@1 and pass@16\\
\bottomrule
\end{tabular}
\end{table}

\paragraph{Checkpoint selection.} For each method--backbone pair we save checkpoints at regular intervals throughout training and report the checkpoint that is best on a held-out validation subset of the training corpus, evaluated under the same Mean@8 or pass@1 protocol as the test benchmarks. The same selection protocol is applied to every method; we do not transfer the selection across methods.

\subsection{\method-specific configuration for Stages~1 and~2}
\label{app:exp:disa}

The offline pipeline of \method introduces three additional design choices on top of the shared RL configuration above: the proposal $\pteach$, the offline budget $|\Dz|$ and $N$, and the regressor $g_\psi$. Table~\ref{tab:app:hparams:disa} summarizes the values used in the main results. The proposal is queried with the same generation hyperparameters as evaluation, namely temperature $0.6$ and top-$p$ $0.95$; offline log-probabilities of $\pref$ and $\pteach$ are recomputed with each model in inference mode. We do not prune prompts on any per-prompt importance-weight statistic in the main results, so $\Dz=\Dtr$.

\begin{table}[h]
\centering
\caption{\method offline-stage configuration.}
\label{tab:app:hparams:disa}
\small
\begin{tabular}{ll}
\toprule
Proposal $\pteach$, default & Qwen3-235B-A22B-Instruct-2507~\citep{qwen3_2025}\\
Proposal $\pteach$, weaker-teacher ablation & Qwen3-4B-Instruct-2507~\citep{qwen3_2025}\\
Offline prompt budget $|\Dz|$ & $17{,}000$ for math, DAPO-17k-processed\\
Samples per prompt $N$ & $8$\\
Aggregator for $\log\Zh_{\mathrm{IS}}$ & \texttt{logsumexp}, eq.~\eqref{eq:m:logsumexp}\\
\midrule
Regressor $g_\psi$ architecture & two-layer MLP, hidden width $64$, scalar output\\
Regressor input & frozen prompt embedding\\
Regressor optimizer & Adam, learning rate $10^{-3}$\\
Regressor batch size & $1024$\\
Regressor training epochs & $80$\\
Regressor objective & least squares on $\log\Zh_{\mathrm{IS}}$ using eq.~\eqref{eq:m:reg}\\
\bottomrule
\end{tabular}
\end{table}

\subsection{Offline rollout budget: selection of $N$}
\label{app:exp:nstudy}

The on-policy group size $G=8$ in Stage~3 fixes a natural anchor for the offline rollout count $N$, the number of teacher samples used per prompt to form $\log\Zh_{\mathrm{IS}}$, but it does not by itself decide whether $N=8$ is large enough to make the offline estimator stable, or so small that doubling it would materially improve the Stage~2 regression target. We address this with a separate variance--bias study of $\Zh_{\mathrm{IS}}$ on a representative subset of the math training prompts, run independently of any RL training.

\paragraph{Subset construction.} We sample a $\sim\!500$-prompt subset of the math training corpus by stratified $k$-means clustering on prompt embeddings. The number of clusters is selected by the elbow method on the SSE curve in Figure~\ref{fig:app:elbow}: the SSE drop transitions from steep to gradual at $k=8$, which we therefore use as the cluster count, and we stratify-sample within each cluster so that the subset stays distributionally representative while keeping teacher inference cost manageable.

\begin{figure}[h]
\centering
\includegraphics[width=0.7\textwidth]{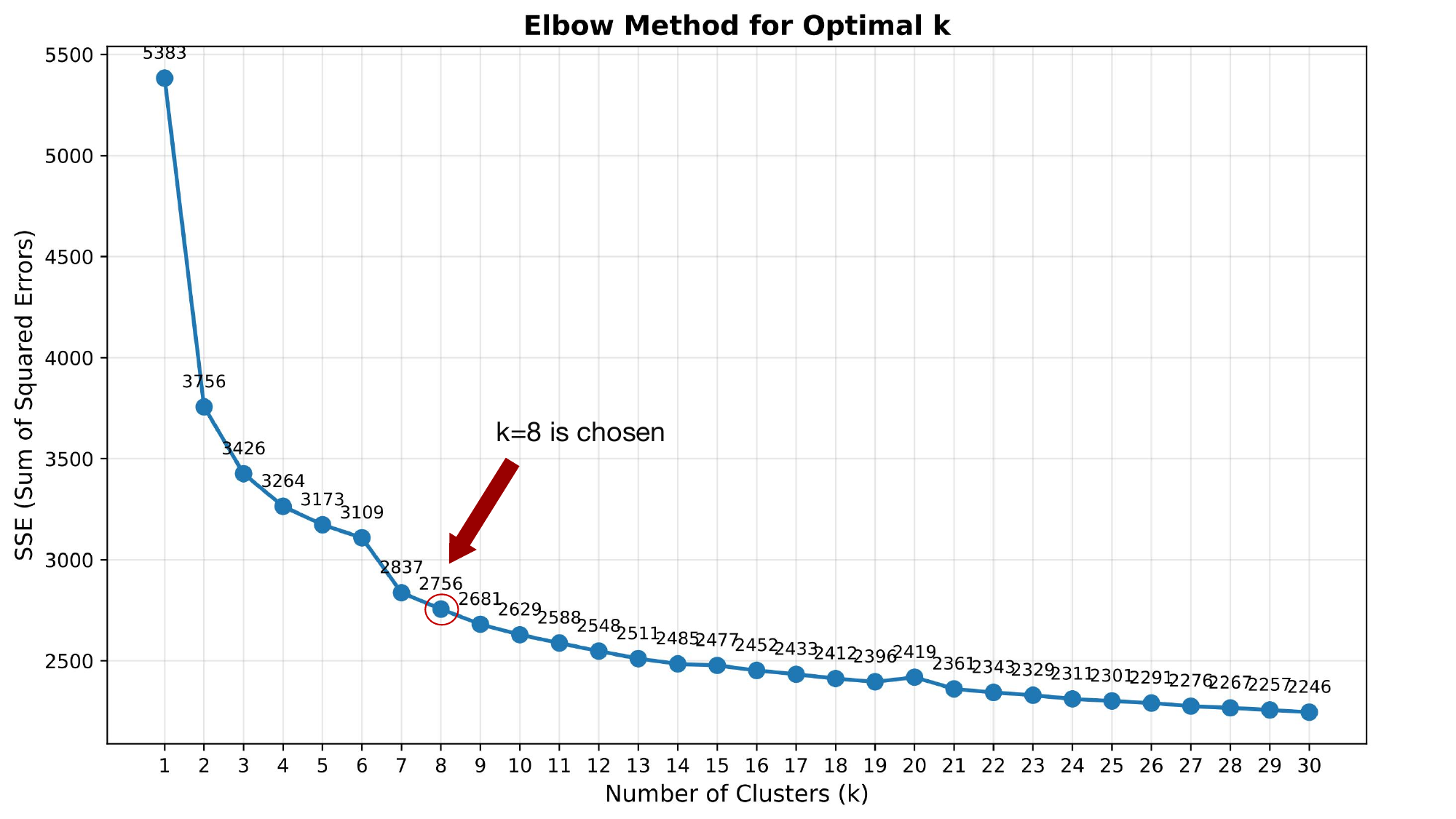}
\caption{\textbf{Cluster-count selection for the variance--bias subset.} SSE of $k$-means clustering on math-prompt embeddings versus the number of clusters $k$. The highlighted elbow at $k=8$ is the cluster count used for stratified sampling of the $\sim\!500$-prompt subset.}
\label{fig:app:elbow}
\end{figure}

\paragraph{Reference and subsampling protocol.} For each subset prompt $q$ we draw a high-precision pool of $32$ teacher rollouts and treat the pool-level estimate $\Zh_{32}(q)$ as a pseudo-ground-truth for $Z(q)$. We then subsample $M\in\{4,8,16\}$ rollouts without replacement and compute the corresponding offline estimate $\Zh_M(q)$, repeating the subsampling many times per prompt. We track two quantities as $M$ varies: the empirical variance of $\log\Zh_M(q)$ across subsamples, and the per-prompt relative bias $|\Zh_M(q)-\Zh_{32}(q)|/\Zh_{32}(q)$.

\paragraph{Results.} Figure~\ref{fig:app:nstudy:trend} reports the means $\pm$ standard deviations and the smoothed trend curves of both metrics. Both curves are monotone-decreasing in $M$ but flatten visibly around $M=8$: relative to $M=2$, the mean variance at $M=8$ has dropped by $\sim\!75\%$ and the median relative bias by $\sim\!50\%$, whereas the additional gain from $M=8$ to $M=16$ is only $\sim\!3$ percentage points of bias for double the teacher inference cost. The boxplots in Figure~\ref{fig:app:nstudy:box} confirm the same pattern at the distributional level: at $M\le 4$ the boxes are wide and dense outliers extend the relative bias up to $0.8+$, indicating highly unstable estimates on a non-trivial fraction of prompts; by $M=8$ the boxes have narrowed substantially and the extreme outliers have largely disappeared, with further compression from $M=8$ to $M=16$ being marginal.

\begin{figure}[h]
\centering
\includegraphics[width=0.7\textwidth]{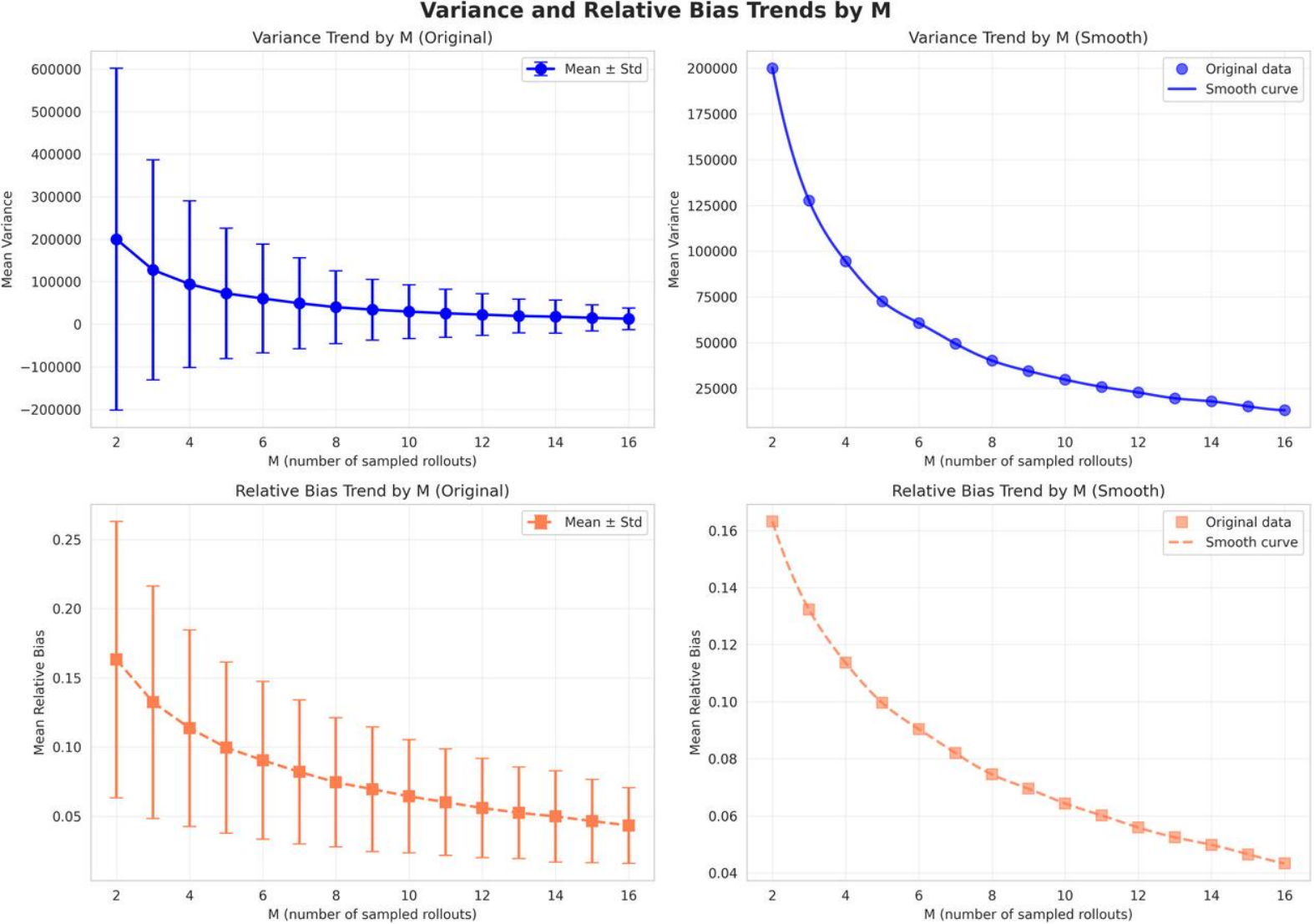}
\caption{\textbf{Variance and relative bias of the offline $\log Z$ estimator versus $M$.} Top panels: mean variance of $\log\Zh_M(q)$ across repeated subsamples, with $\pm$ standard deviation on the left and the smoothed trend on the right. Bottom panels: mean per-prompt relative bias $|\Zh_M-\Zh_{32}|/\Zh_{32}$, with $\pm$ standard deviation on the left and the smoothed trend on the right. Both curves exhibit a clear elbow around $M=8$, with diminishing returns thereafter.}
\label{fig:app:nstudy:trend}
\end{figure}

\begin{figure}[h]
\centering
\includegraphics[width=0.7\textwidth]{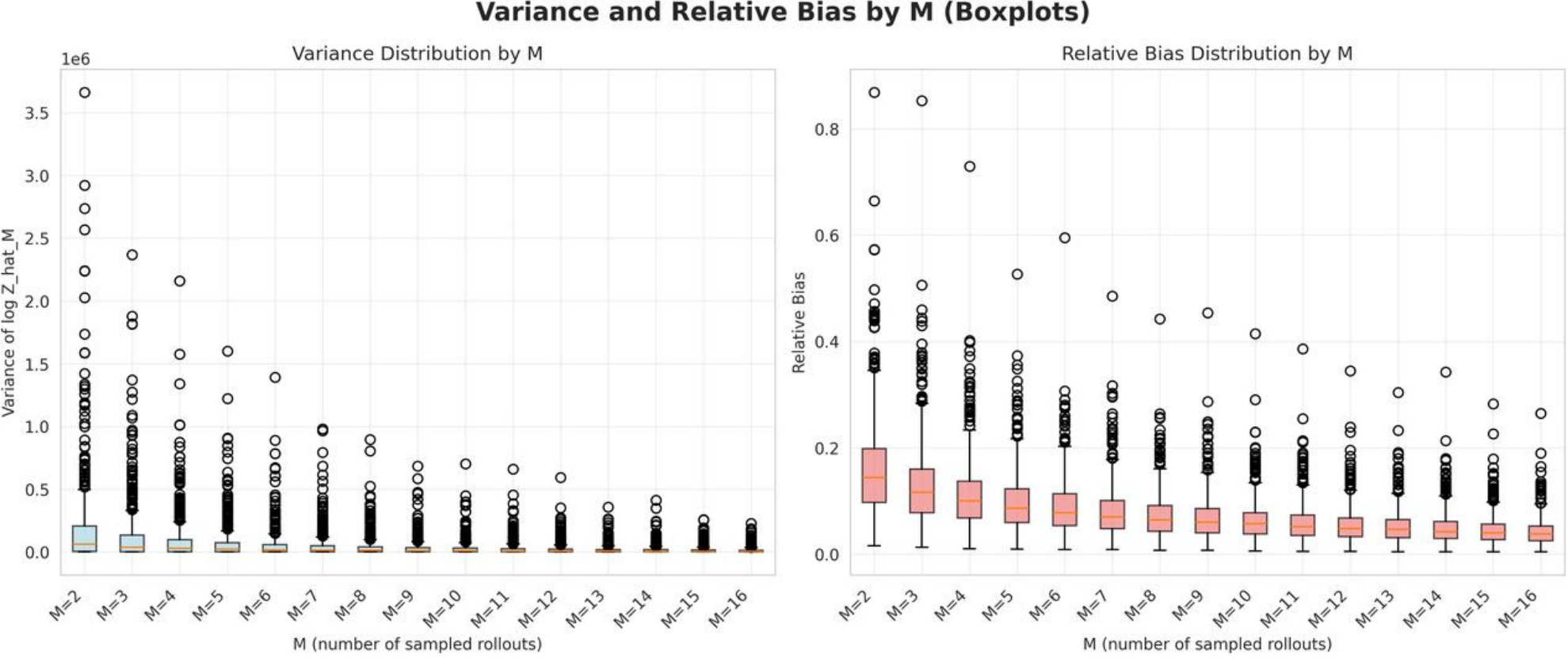}
\caption{\textbf{Distributions of variance and relative bias by $M$.} Boxplots of $\mathrm{Var}(\log\Zh_M)$ in the left panel and the relative bias $|\Zh_M-\Zh_{32}|/\Zh_{32}$ in the right panel across the $\sim\!500$ subset prompts. Dispersion narrows sharply between $M=2$ and $M=8$ and only marginally between $M=8$ and $M=16$; the extreme outliers visible at $M\le 4$ have largely disappeared by $M=8$.}
\label{fig:app:nstudy:box}
\end{figure}

\paragraph{Choice of $N$.} The two metrics together identify $M=8$ as an efficiency--accuracy sweet spot: variance and relative bias have entered their flat regimes while teacher inference cost is still half of $M=16$. We therefore set the offline rollout count to $N=8$ in the main pipeline, which has the additional convenience of matching the on-policy group size $G=8$ in Stage~3.

\subsection{SFT baseline details}
\label{app:exp:sft}

The SFT baseline of~\S\ref{sec:discussion:sft} reuses the Stage~1 trajectories of the math pipeline and is intended as a like-for-like comparison against an alternative use of the same offline asset. We filter the trajectories produced by $\pteach$ to those with verifier reward $r=1$, yielding approximately $120$k complete chain-of-thought solutions; each kept trajectory is paired with its prompt to form a question-answer corpus. We then run a LoRA fine-tune on $\pref$ with the configuration in Table~\ref{tab:app:sft}. Evaluation uses the same Mean@8 protocol as the main results, including the same temperature, top-$p$, and response length, so that the SFT and \method rows of Table~\ref{tab:sft} are directly comparable.

\begin{table}[h]
\centering
\caption{SFT baseline configuration.}
\label{tab:app:sft}
\small
\begin{tabular}{ll}
\toprule
Source data & Stage~1 trajectories with verifier $r=1$, $\approx 120$k CoT\\
Adapter & LoRA, rank $r=64$\\
Optimizer / learning rate & AdamW / $2\!\times\!10^{-5}$\\
Schedule / precision & cosine / bf16, DeepSpeed ZeRO-2\\
Global batch size & $128$\\
Epochs & $3$\\
\verb|max_seq_length| & $18{,}432$\\
Evaluation & Mean@8 with vLLM, temperature $0.6$, top-$p$ $0.95$\\
\bottomrule
\end{tabular}
\end{table}

\subsection{Diversity judge prompt}
\label{app:exp:diversity}

For the diversity evaluation in~\S\ref{sec:discussion:diversity} we use GPT-4o-mini as the judge model. For each problem and each method we generate $8$ rollouts and submit them to the judge with the prompt below, which is the diversity-judge prompt of~\citet{flowrl2025} reproduced verbatim. The judge returns a single integer in $\{1,2,3,4,5\}$; the resulting score is averaged over $60$ problems from AIME 2024 + AIME 2025 for each method, and the standard deviation in Table~\ref{tab:diversity} is the empirical standard deviation over the $60$ per-problem scores.

\begin{quote}\small
\textbf{System.} You are evaluating the DIVERSITY of solution approaches for a mathematics competition problem. Focus on detecting even SUBTLE differences in methodology that indicate different problem-solving strategies.

\textbf{User.} PROBLEM: \texttt{\{problem\}}.\\
$8$ SOLUTION ATTEMPTS: \texttt{\{formatted\_responses\}}.

\textbf{Evaluation criteria.} Rate diversity from $1$ to $5$:\\
\textit{Score 1 -- Minimal Diversity:} $14$+ responses use essentially identical approaches; same mathematical setup, same variable choices, same solution path; only trivial differences (arithmetic, notation, wording); indicates very low exploration/diversity in the generation process.\\
\textit{Score 2 -- Low Diversity:} $11$--$13$ responses use the same main approach; $1$--$2$ alternative approaches appear but are rare; minor variations within the dominant method (different substitutions, orderings); some exploration but heavily biased toward one strategy.\\
\textit{Score 3 -- Moderate Diversity:} $7$--$10$ responses use the most common approach; $2$--$3$ distinct alternative approaches present; noticeable variation in problem setup or mathematical techniques; balanced mix showing reasonable exploration.\\
\textit{Score 4 -- High Diversity:} $4$--$6$ responses use the most common approach; $3$--$4$ distinct solution strategies well-represented; multiple mathematical techniques and problem framings; strong evidence of diverse exploration strategies.\\
\textit{Score 5 -- Maximum Diversity:} no single approach dominates ($\leq 3$ responses use the same method); $4$+ distinctly different solution strategies; wide variety of mathematical techniques and creative approaches; excellent exploration and generation diversity.

IMPORTANT: focus on the DIVERSITY of the attempted approaches. Return ONLY a number from $1$ to $5$.
\end{quote}

The Likert thresholds in the prompt are written for $16$ rollouts as in the FlowRL setup; we adopt the same prompt verbatim while submitting $8$ rollouts per problem, so the score is calibrated against the FlowRL-style rubric. This choice keeps the diversity numbers in Table~\ref{tab:diversity} on a scale that is directly comparable to those reported by~\citet{flowrl2025} on the same benchmark family.

\subsection{Proposal-strength ablation: per-benchmark deltas and Val-MSE reading}
\label{app:exp:weak-teacher-details}

The proposal-strength ablation in~\S\ref{sec:experiments:robust} swaps $\pteach$ from Qwen3-235B-A22B-Instruct-2507 to the $\sim\!60\times$ smaller Qwen3-4B-Instruct-2507 and reruns Stages~1--3 on Qwen3-4B-Base, with every other hyperparameter held fixed. The Mean@8 averages reported in Figure~\ref{fig:weakteacher}(a) are: untrained backbone $29.7$, weaker proposal $37.1$, default proposal $65.5$. The weaker proposal therefore beats the untrained backbone by $7.4$ points but gives up most of the $35.8$-point gain delivered by the default proposal, and two of six per-benchmark deltas, on MATH-500 and Minerva, go negative.

Figure~\ref{fig:weakteacher}(b) reports validation MSE versus training epoch for the Stage~2 regressor under both proposals. Both teachers drive Val MSE to terminal values of order $5\!\times\!10^{-3}$ at the selected checkpoints, so $g_\psi$ fits its IS labels faithfully in either regime. We therefore read Val MSE as an auxiliary convergence diagnostic alongside Mean@8: a normalizer too noisy to support stable RL would have surfaced first as a non-converging or persistently large loss, which is not what we observe. The downstream accuracy gap consequently traces to the variance of the IS labels being fit, not to a failure of the regression to fit them.

\subsection{Inverse-temperature ablation: per-benchmark numbers}
\label{app:exp:beta-details}

The $\beta$ sweep in~\S\ref{sec:experiments:robust} runs \method on Qwen2.5-7B code at $\beta\in\{10,15,20\}$. Numerical details behind Figure~\ref{fig:beta}:

\begin{itemize}\itemsep2pt
\item \textbf{$\beta=10$ vs.\ default $\beta=15$.} Average pass@1 drops slightly from $31.80$ to $30.91$, with $\beta=10$ in fact slightly exceeding the default on LiveCodeBench and CodeForces; average pass@16 drops by $5.4$ points, from $45.3$ to $39.9$.
\item \textbf{$\beta=20$ vs.\ default $\beta=15$.} Average pass@1 collapses to $13.75$, an $\sim\!18$-point drop driven mainly by HumanEval+, where it falls from $71.2$ to $24.1$. Average pass@16 falls to $28.3$, only $+2.5$ over the Vanilla backbone, with LiveCodeBench and CodeForces individually falling \emph{below} the backbone's pass@16.
\end{itemize}

The asymmetry between the under-tilting and over-tilting sides matches the predicted failure modes of~\S\ref{sec:experiments:robust}: under-tilting yields a near-plateau in which mild signal loss in $\hat r_i$ is mostly absorbed by the regression, whereas over-tilting runs into a variance cliff once $\mathrm{CV}^2(w)$ becomes large enough that the LSE estimator is dominated by a few high-weight samples, with diversity-relevant pass@16 especially sensitive to the resulting mode collapse.


\end{document}